%% file: iclr2027_conference.tex
\documentclass{article}
\usepackage{iclr2027_conference,times}

\input{math_commands.tex}

\usepackage{hyperref}
\usepackage{url}
\usepackage{graphicx}
\usepackage{wrapfig}
\usepackage{booktabs}
\usepackage{makecell}
\usepackage[table]{xcolor}
\usepackage{xurl}
\usepackage{tocloft}
\usepackage{multirow}
\usepackage{amsmath,amssymb,amsthm}
\usepackage{svg}

\newtheorem{fact}{Fact}
\newtheorem{proposition}{Proposition}
\newtheorem{corollary}{Corollary}
\title{
UltraBERT: Beyond the Limit of BERT \\
Pre-Training
}

\newif\iflinksinauthor
\linksinauthortrue

\definecolor{hfpurple}{HTML}{3C2C63}
\newcommand{\hficon}{\raisebox{-1.5pt}{\includegraphics[height=1.05em]{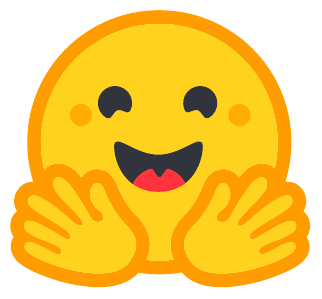}}}
\newcommand{\hflink}[2]{\href{#1}{\textcolor{hfpurple}{\texttt{#2}}}}
\newcommand{\modellinks}{%
  \noindent\hficon~\textbf{Model:}
  \hflink{https://huggingface.co/dmis-lab/UltraBERT-150M}{UltraBERT-150M}\quad
  \hflink{https://huggingface.co/dmis-lab/UltraBERT-380M}{UltraBERT-380M}
  }

\author{Jungwoo Park$^{1,2}$, Taewhoo Lee$^{1,3}$\thanks{Work done prior to joining Amazon.} \ \& Jaewoo Kang$^{1,2}$\thanks{Corresponding author.}
\iflinksinauthor \vspace{0.2em} \fi
\\
$^1$Korea University, $^2$AIGEN Sciences, $^3$Amazon
\iflinksinauthor \vspace{0.2em} \fi
\\
\texttt{\{jungwoo-park,taewhoo,kangj\}@korea.ac.kr}
\iflinksinauthor \vspace{0.2em} \\ \modellinks \fi
}

\iclrfinalcopy 

\begin{document}
\maketitle
\lhead{Preprint. Under review.}

\input{Sections/00_Abstract}

\addtocontents{toc}{\protect\setcounter{tocdepth}{-1}}

\input{Sections/01_Introduction}
\input{Sections/02_Related_Work}
\input{Sections/03_Exploration}
\input{Sections/04_UltraBERT}

\input{Sections/05_Conclusion}

\input{Sections/06_After_Conclusion}

\bibliography{References/General,References/Datasets,References/Models,References/MLM_Masking}
\bibliographystyle{iclr2027_conference}

\clearpage
\appendix
\textbf{\huge\scshape Appendix}
\addtocontents{toc}{\protect\setcounter{tocdepth}{2}}
\tableofcontents
\clearpage

\input{Sections/AA_Training_Setup}
\input{Sections/AA_Evaluation_Protocols}
\input{Sections/AA_RTD_Analysis}
\input{Sections/AA_KoLeo_Effective_Rank}
\input{Sections/AA_Evaluation_Results}

\end{document}

%% file: math_commands.tex
\usepackage{amsmath,amsfonts,bm}

\def\eqref#1{equation~\ref{#1}}

\def\1{\bm{1}}

\DeclareMathAlphabet{\mathsfit}{\encodingdefault}{\sfdefault}{m}{sl}
\SetMathAlphabet{\mathsfit}{bold}{\encodingdefault}{\sfdefault}{bx}{n}



%% file: Sections/00_Abstract.tex
\begin{abstract}


Recent bidirectional encoders have been modernized with innovations from causal large language models (LLMs). However, the pretraining recipe itself remains inherited from small-scale conventions, leaving encoders underoptimized, with no single model performing competitively across both classification and retrieval tasks. In this work, we investigate how far encoder performance can be pushed by revisiting core design choices in training, revealing three key findings: (1) Encoders saturate on downstream tasks beyond a few trillion tokens, indicating that additional compute past this point does not help. (2) Across model sizes, lower masking ratios benefit classification while higher ones benefit retrieval, contradicting prior work suggesting larger ratios for larger models. (3) Distillation with pruning improves classification at low masking ratios but degrades retrieval at high ones. Guided by these findings, we present UltraBERT, a single encoder that assigns separate LM heads to low and high masking ratios within a batch and restricts the distillation objective to the low ratio. To our knowledge, UltraBERT is the first encoder to achieve state of the art on both classification and retrieval tasks, reaching 91.0 on GLUE and 47.7 on BEIR. Together, these results show how far encoder performance can be pushed and provide a principled basis for compute allocation, masking ratio design, and knowledge transfer in bidirectional models.

\iflinksinauthor\else \vspace{0.3\baselineskip} \modellinks \fi
\end{abstract}

%% file: Sections/01_Introduction.tex
\section{Introduction}


Bidirectional encoders remain essential for non-generative applications, even amid the rapid progress of decoder-only large language models (LLMs). Their modest inference cost allows them to embed or score large volumes of text efficiently, making them the backbone of retrieval-augmented generation (RAG) pipelines~\citep{rag}, quality filtering of pre-training data~\citep{fineweb-edu}, and reward modeling for reinforcement learning~\citep{bert-as-a-judge}. 
While decoders owe much of their popularity to their generalization to unseen tasks, fine-tuned encoders have often been shown to match much larger decoders on well-defined tasks at a fraction of the cost \citep{bert-better-than-llm,bert-better-than-llm2,bert-better-than-llm3}.



Recent work has increasingly modernized encoders with innovations from decoder-only LLMs \citep{mosaicbert,modernbert,neobert}, adopting architectural advances such as rotary position embeddings~\citep{rope}, FlashAttention-2~\citep{flash-attn-2}, along with token scaling~\citep{beyond-chinchilla-optimal}. 
However, the pre-training recipe itself remains inherited from small-scale conventions. In particular, the masked language modeling (MLM) objective, which is specific to encoders, is still configured with lessons from prior studies under limited token budgets~\citep{should-you-mask-15-mlm,should-we-still-pretrain-encoders-mlm}, where convergence speed is entangled with final performance. 
This core component trades off classification against retrieval, leaving encoders trained with a single configuration specialized, with no single model performing competitively on both.


%

In this work, we begin by revisiting encoder pretraining at scale, and reveal three key findings. (1) Training encoders on up to 8.8T tokens, we find that downstream performance saturates around 1--2T tokens even as training and validation losses keep decreasing, unlike decoders, which continue to improve. This indicates an upper bound that additional tokens cannot raise. (2) At saturation, where convergence speed no longer confounds the comparison, classification consistently favors lower masking ratios across model sizes, contradicting prior work suggesting larger ratios for larger models, whereas retrieval favors higher ones. (3) Unlike in decoders, pruning and distillation from a stronger teacher does not uniformly raise encoder performance: it improves classification at a low masking ratio but degrades retrieval at a high one.

%
\input{Figures/01_Overall_Comparison}

Guided by these findings, we present UltraBERT, which reconciles the two masking regimes within a single encoder rather than settling for a single configuration. Observing that the two regimes shape the LM head into distinct geometries,
we propose \textit{LM head decoupling}, which mixes 5\% and 50\% masked sequences within each batch while assigning a separate LM head to each ratio.
Furthermore, we restrict distillation to the low-ratio sequences, while training the high-ratio ones with standard MLM.
Starting from a 1.3B teacher, we obtain 380M and 150M models through pruning and distillation, trained on 2T tokens with native 32K context. 
Across classification and retrieval, where the strongest baseline differs, UltraBERT achieves state of the art on both, reaching 91.0 on GLUE \citep{glue} and 47.7 on BEIR \citep{beir}. 
Our 150M model matches RoBERTa-large \citep{roberta} on classification with 2.3$\times$ fewer parameters, and outperforms Ettin-encoder-400M \citep{ettin} on retrieval at 2.6$\times$ smaller size. Together, these results redefine what a single encoder can achieve, pushing encoder pretraining beyond its presumed limits.


%
%
 

%% file: Figures/01_Overall_Comparison.tex

\begin{figure}[t]
  \centering
  \vspace{-\baselineskip}
  \includegraphics[width=0.9\linewidth]{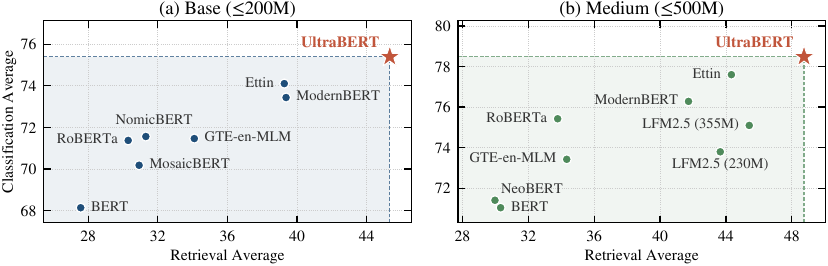}
  \vspace*{-0.5em}
  \caption{Classification and retrieval performance of encoders in two size groups, (a) base ($\leq$200M) and (b) medium ($\leq$500M). The classification average is the mean of GLUE, SQuAD 2.0, MultiCoNER v2, Few-NERD, and RewardBench, and the retrieval average is the mean of BEIR, RAR-b (averaged over its Code and Math subsets), and MLDR. All scores are taken from Table~\ref{tab:04_Main_Table}.}
  \label{fig:01_Main_Comparison}
\end{figure}

%% file: Sections/02_Related_Work.tex
\section{Related Work}

\vspace{-0.25\baselineskip}
\paragraph{Modernized encoders}
Since BERT~\citep{bert} and RoBERTa~\citep{roberta}, the development of encoders has been slower than that of decoder-only LLMs. Recent work has revived encoders by bringing in innovations from LLMs, such as RoPE, GLU, and alternating local and global attention, along with trillions of training tokens~\citep{mosaicbert,nomic-embed,mgte,modernbert,neobert,ettin}. However, the pre-training recipe itself, in particular the MLM objective and its masking ratio, has been inherited from the conventions of the BERT era. As a result, each of these encoders excels on one side only, with none performing competitively across both classification and retrieval.

\vspace{-0.5\baselineskip}
\paragraph{Masking ratio and pre-training objectives}
The masking ratio of MLM has been revisited in several studies, which report that ratios higher than the original 15\% are beneficial, especially for larger models~\citep{should-you-mask-15-mlm,dynamic-masking-rate,learning-better-masking,r2mae,should-we-still-pretrain-encoders-mlm}. However, these findings are based on budgets of 10B--100B tokens, where the effect of the masking ratio is entangled with convergence speed. Another line of work replaces MLM with Replaced Token Detection (RTD)~\citep{electra,debertav3}, which is considered superior on classification tasks but is known to degrade retrieval~\citep{electra-emb-bad,modernbert}. This comparison, however, has always been made at the conventional 15\% masking, without varying the corruption level. In contrast, our work revisits both questions after reaching saturation with trillions of tokens, on both classification and retrieval tasks.

\vspace{-0.5\baselineskip}
\paragraph{Encoders for retrieval}
Bidirectional encoders are also widely adopted for information retrieval. Bi-encoders are trained to encode a given text into a single vector, which is aligned to be close to those of similar texts~\citep{sbert}. They usually start from MLM pre-training to acquire basic linguistic knowledge, and then go through contrastive fine-tuning~\citep{e5,gte,nomic-embed,mgte}. While the fine-tuning stage has been extensively optimized, the pre-training recipe of the backbone determines the upper bound of the resulting model. Encoders designed for retrieval, however, tend to fall behind on classification.

\vspace{-0.5\baselineskip}
\paragraph{Knowledge distillation for language models}
Knowledge distillation for encoders has been proposed to match various targets: masked token reconstruction logits~\citep{distilbert,tinybert}, hidden state vectors~\citep{distilbert,tinybert}, self-attention scores~\citep{tinybert,minilm}, and Gram matrices of attention vectors~\citep{minilm,minilmv2}. Meanwhile, recent LLMs usually adopt logit-based distillation to develop small language models~\citep{gemma2,gemma3,lfm2,llama3.2}. In particular, \citet{minitron} and \citet{ministral3} propose \textit{Cascade Distillation}, where a small model is initialized from the pruned weights of its parent and then distilled from a stronger teacher. However, whether these approaches transfer to encoders, and how they interact with the masking ratio, remains unexplored.

%% file: Sections/03_Exploration.tex

\section{Exploring the Pre-Training Recipe for Stronger Encoders}
\label{sec:ablation-section}

In this section, we examine the core design choices of MLM pre-training under controlled settings. We train encoders at three scales (150M, 380M, and 1.3B) with RoPE and alternating local and global attention, similar to \citet{modernbert}. All experiments use the same dataset mixture and tokenizer. Following the evaluation protocol of \citet{modernbert}, we fine-tune and evaluate on GLUE~\citep{glue} and SQuAD 2.0~\citep{squad2} for classification, and fine-tune on MS MARCO~\citep{msmarco} and report the average nDCG@10 on BEIR~\citep{beir} for retrieval. Further details of the experimental setup are provided in Appendix~\ref{app:training-setup} and \ref{app:eval-protocols}.

\subsection{Training Token Scaling and Downstream Saturation}
\label{sec:ablation-token-scaling}


Prior work has demonstrated that decoder models continue to improve at extreme token scale~\citep{beyond-chinchilla-optimal}, and that their downstream task performance is also predictable alongside loss~\citep{lm-over-training}. This raises a natural question: can encoder models also benefit from more training tokens? To answer this question, we train encoders at two scales (150M and 1.3B) on 8.8 trillion tokens, with a masking ratio of 30\%. We then evaluate intermediate checkpoints on GLUE, SQuAD~2.0, and BEIR to observe how downstream performance evolves.

The results are shown in Figure~\ref{fig:03_01_Figure_Token_Budget_Extension}. Encoders trained with the MLM objective saturate around 1--2 trillion tokens, while both training and validation losses keep decreasing. In particular, the BEIR scores saturate even earlier, showing no significant improvement beyond the first evaluation point of 275B tokens. These observations indicate that (1) encoders do not continue to improve on downstream tasks with more training tokens, and (2) unlike decoders, their fine-tuned downstream performance is not directly correlated with loss.
\input{Figures/03_01_Figure_Token_Budget_Extension}

We then ask whether this saturation reflects a fundamental difference between encoders and decoders. To this end, we evaluate the intermediate checkpoints of our 1.3B encoder and the SmolLM2 1.7B decoder~\citep{smollm2} on 6 benchmarks most widely used for decoders.
%
Figure~\ref{fig:03_02_Figure_Encoder_vs_Decoder} shows clearly divergent trends. For base models, both the encoder and decoder improve in 5-shot accuracy, and the decoder in particular exhibits a rapid rise known as \textit{emergent abilities}~\citep{emergent-abilities}. However, after instruction tuning on the multiple-choice question answering (MCQA) format, the encoder saturates as on the previous downstream tasks, while the decoder keeps improving.

These results suggest that encoder models have an upper bound on downstream performance that additional pre-training tokens cannot raise. If we have enough compute budget to exceed the saturation point, how can we raise this upper bound itself? This question strongly motivates us to revisit the encoder pre-training methodology.

\input{Figures/03_02_Figure_Encoder_vs_Decoder}

\subsection{Masking Ratios and Classification Tasks}
\label{sec:ablation-masking-ratios-and-classification}

The MLM objective differs from the causal LM objective in one respect: the masking ratio. Its impact has been investigated in various contexts~\citep{r2mae,learning-better-masking,should-you-mask-15-mlm,should-we-still-pretrain-encoders-mlm,eurobert}, yet its effect has been entangled with convergence speed. In this section, we revisit it from the perspective of downstream classification tasks, but after reaching saturation, where this entanglement no longer applies.

In detail, we train encoders at three scales with masking ratios up to 50\%. As lower ratios require more tokens to converge~\citep{should-you-mask-15-mlm,dynamic-masking-rate}, we allocate tokens accordingly\footnote{We use 1T tokens at 50\% and 30\%, 2T tokens at 15\%, and 5T tokens at 6\% and 3\%. Empirically, saturation occurred around 1--2T tokens at 15\% and 2--3T tokens at 6\% and 3\%, so these budgets are sufficient.} based on the saturation points identified in Section~\ref{sec:ablation-token-scaling} and evaluate on GLUE and SQuAD 2.0.

\input{Figures/03_03_Figure_Masking_Ratios_and_Classification}

Figure~\ref{fig:03_03_Figure_Masking_Ratios_and_Classification} shows that classification performance consistently degrades as the masking ratio increases, across all model sizes. This is contrary to prior work~\citep{should-you-mask-15-mlm, should-we-still-pretrain-encoders-mlm, r2mae}, which finds larger masking ratios optimal for larger models. We attribute this discrepancy to eliminating the effect of convergence speed by using trillions of tokens, whereas prior results were confined to 10B--100B budgets. Notably, \citet{should-you-mask-15-mlm} argued that higher ratios are preferable since the gain from faster convergence outweighs the cost of corruption, treating the benefit of less corruption only briefly. At saturation, however, we find the limitation from corruption to be far more pronounced.

\vspace{-0.5\baselineskip}
\paragraph{Impact of contextual corruption}
To understand the effect of corruption on encoders, we further compare MLM with RTD. 
Conventionally, ELECTRA-style models are considered to be superior on classification tasks~\citep{electra,debertav3}, and they remain so even against recent modernized encoders trained with MLM~\citep{modernbert-or-debertav3}, though at a critical cost on retrieval~\citep{electra-emb-bad,modernbert}.

The superiority of RTD is often explained by its higher training efficiency~\citep{electra}. Training to saturation factors out this efficiency, allowing us to isolate what makes RTD superior beyond faster convergence.
One remaining difference lies in \textit{contextual corruption}, i.e., how contextually plausible the corrupted tokens are. In RTD, a generator fills masked tokens with semantically relevant alternatives, which are more informative than the \texttt{[MASK]} token. We hypothesize that this difference mainly explains the superiority.

\input{Figures/03_04_Figure_MLM_vs_RTD}

We train 150M discriminators, with generators of half the depth, following \citet{debertav3}. Since the actual replacement ratio changes during training as the generator improves, we adjust the generator's masking ratio so that the average replacement ratio matches that of MLM. We report the average score of GLUE and SQuAD 2.0, excluding CoLA.\footnote{We discuss CoLA as a confounder in Appendix~\ref{app:cola-confounder}.}

Figure~\ref{fig:03_04_Figure_MLM_vs_RTD} shows the improvement of RTD over MLM. As the corruption ratios of the two objectives are not exactly the same, Figure~\ref{fig:03_04_Figure_MLM_vs_RTD}(b) is computed by interpolating both curves.
The gap peaks around 15\%, the masking ratio adopted by \citet{electra} and \citet{debertav3}, and diminishes toward both ends for different reasons. Around 50\%, the generator has too little context, resulting in trivial replacements, which are known to bring no gain over MLM \citep{electra-rtd-variants}. Around 3\%, the difference between plausible alternatives and \texttt{[MASK]} tokens becomes negligible, as the absolute corruption level is small. 
In practice, since classification favors low masking ratios in the first place,
encoders can retain MLM without sacrificing classification, avoiding the retrieval degradation that RTD entails.

\subsection{Masking Ratios and Retrieval Tasks}
\label{sec:ablation-masking-ratios-and-retrieval}

Lower masking ratios favor classification, but it is unclear whether the same holds for downstream retrieval tasks, the other major application of bidirectional encoders. Beyond the classification tasks, we now examine retrieval, reporting the average nDCG@10 on BEIR as the downstream performance. Figure~\ref{fig:03_05_Figure_Masking_Ratios_and_Retrieval_and_KoLeo}(a) shows the opposite trend to classification. The masking ratio of 50\% consistently achieves the highest score across all model scales, which is also in line with a few prior works~\citep{eurobert,should-we-still-pretrain-encoders-mlm}. Overall, no single masking ratio is optimal for both classification and retrieval.

\input{Figures/03_05_Figure_Masking_Ratios_and_Retrieval_and_KoLeo}

To understand why higher masking ratios benefit retrieval, we look into what changes in the representation space.
We measure the effective rank~\citep{effective-rank} of the mean-pooled vectors of each trained model. The effective rank reflects how many dimensions in the representation space the model actually uses. As shown in Figure~\ref{fig:03_05_Figure_Masking_Ratios_and_Retrieval_and_KoLeo}(b), the effective rank tends to increase with the masking ratio, indicating that higher masking ratios lead to higher utilization of the representation space.

We cannot tell whether the higher effective rank causes the improvement in retrieval, or whether the two are independent phenomena.
Since using more dimensions is generally favorable for retrieval~\citep{matryoshka-repr-learning,retrieval-eff-rank}, we ask whether the effective rank can be raised by regularization instead.
As we need to use a lower masking ratio for classification, we train the encoder with 6\% masking but try adopting the KoLeo regularizer~\citep{koleo}, which is used for representation learning~\citep{dinov2,dinov3}.
This encourages the vectors to spread isotropically, resulting in a higher effective rank~\citep{dimensional-collapse-in-vqvae}.\footnote{The relation between the KoLeo regularizer and the effective rank is explained in Appendix~\ref{app:koleo-erank}.}

Figure~\ref{fig:03_05_Figure_Masking_Ratios_and_Retrieval_and_KoLeo}(c) and (d) compare the performances with and without the KoLeo regularizer on GLUE and BEIR, respectively. Raising the effective rank artificially cannot resolve the trade-off at hand: (1) the regularizer indeed improves BEIR but less than single 50\% masking, and (2) it also harms the GLUE score. From this experiment, we confirm the strong contrast between classification and retrieval across masking ratios, and the limitation of using a single masking ratio.

\subsection{Mixing Masking Ratios with LM Head Decoupling}
\label{sec:ablation-lm-head-decoupling}

From the previous sections, we observed that different masking ratios specialize encoders for different tasks, and that regularization cannot naturally resolve the trade-off. Since the two tasks favor opposite masking ratios, achieving the best of both without compromise can be viewed as multi-task learning (MTL). Under this view, we instead consider mixing multiple masking ratios within a batch, treating each ratio as a task and jointly optimizing their losses.


MTL frequently faces conflicts across tasks due to parameter sharing~\citep{mtl1,mtl2}. \citet{debertav3} claimed that sharing input embeddings between the generator and discriminator leads to tug-of-war dynamics, where the two disturb each other, and resolved this with gradient-level disentanglement. Similarly, we observed the divergence of the encoders' representation spaces through their effective rank in Section~\ref{sec:ablation-masking-ratios-and-retrieval}. We thus visualize the input embeddings and LM heads to look for such a conflict.

\input{Figures/03_07_Figure_Embedding_Geometries}

\input{Figures/03_08_Figure_Embedding_Spectrals}

Figure~\ref{fig:03_07_Figure_Embedding_Geometries} shows the geometry of both matrices for encoders trained with 6\% and 50\% masking. As illustrated, the LM heads take clearly different shapes, while the input embeddings are almost identical.
The discrepancy is also observed in Figure~\ref{fig:03_08_Figure_Embedding_Spectrals}, which plots the singular value spectra of the embeddings and LM heads. The LM head for 6\% masking shows more concentrated singular values, indicating that the matrix occupies a lower-dimensional subspace. This is consistent with the model's lower effective rank. The encoder's gradients are computed from the loss through its LM head, so we conjecture that sharing it may cause gradient-level interference, and test decoupling the two heads.


We train encoders with various masking ratio mixing strategies, which have been explored in preliminary forms with different motivations: (1) decaying the ratio from 50\% to 6\% and 15\% \citep{dynamic-masking-rate,learning-better-masking}, and (2) sampling the ratio uniformly from 5\% to 50\% \citep{r2mae}, denoted as $\mathcal{U}(5\%,50\%)$. Since performance on each task monotonically favors the opposite extreme, we further consider (3) combining 5\% and 50\% masked sequences within a batch, denoted as $\{5\%,50\%\}$, with and without LM head decoupling. Note that 5\% masking yields sparser gradient signals and converges more slowly than 50\%, so we include four times more sequences masked at 5\% in the batch.\footnote{Contrary to common expectation, we found the convergence speed not proportional to the masking ratio, though lower ratios still converge slower.} We evaluate all models on GLUE and BEIR.

\input{Tables/03_01_Table_LM_Head_Decoupling}

Table~\ref{tab:03_01_Table_LM_Head_Decoupling} shows the results. Masking ratio decay brings no improvement over the single masking ratio on GLUE, and even harms it at 6\% (88.8 vs 89.5). BEIR improves slightly,
as the encoder is exposed to the larger ratio, yet still far from 45.0 of 50\%. Either the early high corruption limits the upper bound, or the low-ratio phase needs to be much longer, which defeats the purpose of decay. Meanwhile, $\mathcal{U}(5\%,50\%)$, whose average ratio is 27.5\%, performs almost the same as the single 30\% (88.0 vs 88.3 on GLUE, both 44.2 on BEIR). Figure~\ref{fig:03_09_Figure_Embedding_Geometries_V2}(a) and (b) also show that its LM head geometry is close to that of 30\%. Mixing ratios over a range thus converges to the effect of the average.

In contrast, $\{5\%,50\%\}$ with decoupled LM heads achieves 89.4 on GLUE and 45.1 on BEIR, reaching the best scores of 6\% (89.5) and 50\% (45.0). Without decoupling, GLUE stays at 88.9, on par with the single 15\%, while BEIR still improves from 43.8 to 44.7. Although exposure to the large masking ratio alone helps retrieval,
these results suggest that sharing the LM head creates a conflict between masking regimes, which head decoupling alleviates.
Figure~\ref{fig:03_09_Figure_Embedding_Geometries_V2}(c) and (d) are consistent with this, where the decoupled heads for 5\% and 50\% take distinct geometries.

\input{Figures/03_09_Figure_Embedding_Geometries_V2}

\subsection{Revisiting Pruning and Distillation }
\label{sec:ablation-distillation}




We resolve the trade-off between classification and retrieval with LM head decoupling, but each is still bounded by what the model can reach on its own. To go beyond the limit, we revisit pruning and distillation, as in \citet{distilbert}, \citet{minitron}, and \citet{ministral3},  under different masking ratios.

For distillation, we restrict ourselves to logit-based distillation, which is still widely used in recent LLMs. Attention-based distillation, including self-attention scores~\citep{tinybert,minilm} and Gram matrices~\citep{minilm,minilmv2}, requires $O(L^2)$ memory and computational costs for length $L$. These are unaffordable for long contexts and incompatible with FlashAttention.

\input{Tables/03_02_Table_Distillation}

All experiments are conducted at 15\% and 50\% masking.
We first prune the trained 1.3B encoders to 380M following \citet{minitron} and \citet{ministral3}, and distill each from its 1.3B parent on 500B tokens.\footnote{As in prior work, distillation converges faster; we observe saturation at 250B tokens for 15\% masking.}
We then evaluate on GLUE and BEIR, comparing with the 380M models trained from scratch.
Results are presented in Table~\ref{tab:03_02_Table_Distillation}. For 15\% masking, the distilled model indeed improves on GLUE (90.2 to 90.8), consistent with lower ratios benefiting classification, while BEIR remains the same. In contrast, the model distilled at 50\% masking performs worse than the from-scratch model on BEIR (47.2 to 46.5). This is somewhat surprising, since we expected distillation to help the corresponding task. This degradation may come from the trivial signal of the teacher at 50\% masking, similar to the interpretation of RTD in Section~\ref{sec:ablation-masking-ratios-and-classification}. Regardless of the underlying cause, pruning and distillation yield no gains over training from scratch at 50\% masking, motivating us to apply distillation only at the low ratio.


%% file: Figures/03_01_Figure_Token_Budget_Extension.tex
\begin{figure}[h]
  \centering
  \includegraphics[width=\linewidth]{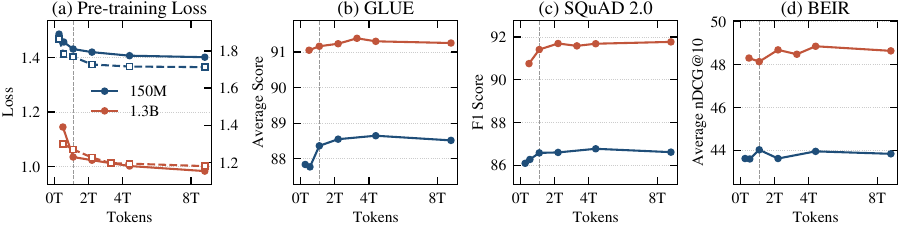}
  \vspace*{-1.7em}
  \caption{
    \textbf{Scaling the training token budget.} Pre-training loss and downstream performance of 150M and 1.3B encoders. In (a), solid lines are the training loss (left axis) and dashed lines are the validation loss on WikiText-103~\citep{wikitext103} (right axis). (b)~GLUE average score, (c)~SQuAD~2.0 F1, and (d)~BEIR average nDCG@10 are obtained.
  }
  \label{fig:03_01_Figure_Token_Budget_Extension}
  \vspace{-0.6\baselineskip}
\end{figure}

%% file: Figures/03_02_Figure_Encoder_vs_Decoder.tex
\begin{figure}[h]
  \centering
  \includegraphics[width=0.8\linewidth]{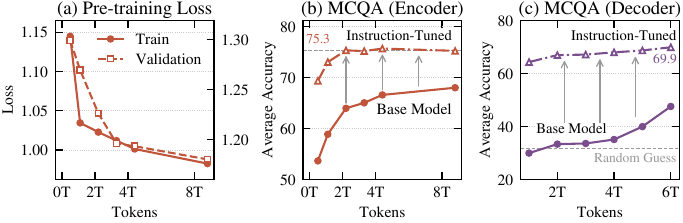}
  \vspace*{-0.8em}
  \caption{
    \textbf{Accuracy before and after instruction tuning.} (a) Pre-training loss of the 1.3B encoder. (b) Its MCQA average accuracy as a 5-shot base model (solid) and after instruction tuning (dash-dotted). (c) The same comparison for a 1.7B decoder; the horizontal line is the random baseline.
  }
  \label{fig:03_02_Figure_Encoder_vs_Decoder}
  \vspace{-0.5\baselineskip}
\end{figure}

%% file: Figures/03_03_Figure_Masking_Ratios_and_Classification.tex
\begin{wrapfigure}{R}{0.52\linewidth}
  \centering
  \vspace{-1.1\baselineskip}
  \hspace*{-1em}
  \includegraphics[width=1.0\linewidth]{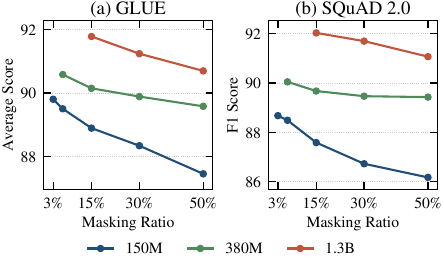}
  \vspace*{-1em}
  \caption{(a) GLUE average score and (b) SQuAD 2.0 F1 against the pretraining masking ratio, for encoders of 150M, 380M and 1.3B scales. Both classification tasks degrade consistently as the masking ratio increases.}

  \label{fig:03_03_Figure_Masking_Ratios_and_Classification}
  \vspace{-1.2\baselineskip}
\end{wrapfigure}

%% file: Figures/03_04_Figure_MLM_vs_RTD.tex
\begin{wrapfigure}{R}{0.5\linewidth}
  \centering
  \vspace{-\baselineskip}
  \includegraphics[width=1.0\linewidth]{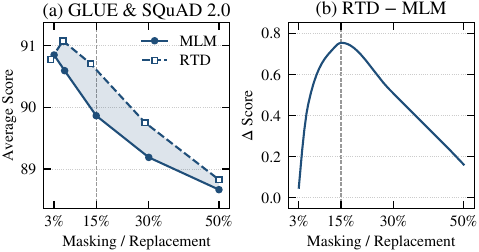}
  \vspace*{-1.5em}
  \caption{MLM and RTD at matched corruption levels, for 150M encoders. (a) Average of GLUE without CoLA and SQuAD 2.0 F1 score. (b) The gap between interpolated curves. The dashed vertical line marks 15\%.}
\label{fig:03_04_Figure_MLM_vs_RTD}
  \vspace{-0.5\baselineskip}
\end{wrapfigure}

%% file: Figures/03_05_Figure_Masking_Ratios_and_Retrieval_and_KoLeo.tex
\begin{figure}[h]
  \centering
  \vspace{-0.25\baselineskip}
  \includegraphics[width=\linewidth]{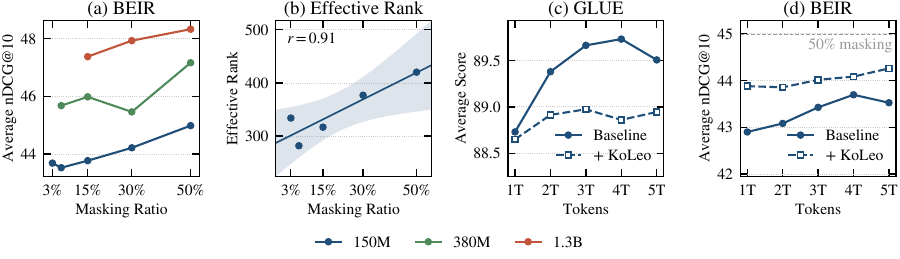}
  \vspace*{-2em}
  \caption{
    \textbf{Masking ratios and retrieval.} (a) BEIR average nDCG@10 against the masking ratio for 150M, 380M, and 1.3B encoders. (b) Effective rank of the 150M model against the masking ratio. (c) GLUE average score and (d) BEIR average nDCG@10 of the 150M encoder pre-trained at 6\% masking with (dashed) and without (solid) the KoLeo regularizer.
}
  \label{fig:03_05_Figure_Masking_Ratios_and_Retrieval_and_KoLeo}
  \vspace{-0.25\baselineskip}
\end{figure}

%% file: Figures/03_07_Figure_Embedding_Geometries.tex
\begin{figure}[h]
  \centering
  \includegraphics[width=1.0\linewidth]{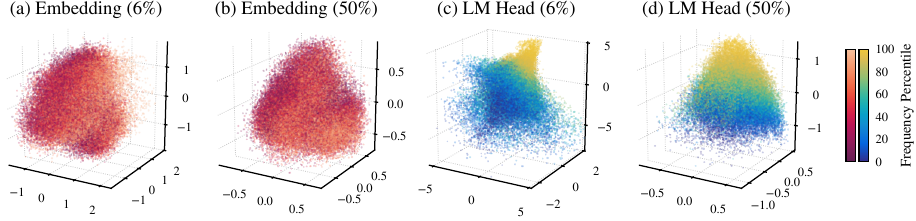}
  \vspace*{-1.7em}
  \caption{
    \textbf{Geometry of input embeddings and LM heads.} First three principal components of the input embedding matrix (a, b) and the LM head (c, d) of 150M encoders pre-trained at 6\% and 50\% masking. Each point is one vocabulary entry, colored by its frequency percentile in the corpus.
}

  \label{fig:03_07_Figure_Embedding_Geometries}
\end{figure}

%% file: Figures/03_08_Figure_Embedding_Spectrals.tex
\begin{wrapfigure}{R}{0.47\linewidth}
  \centering
  \vspace{-\baselineskip}
  \hspace*{-1em}
  \includegraphics[width=1.0\linewidth]{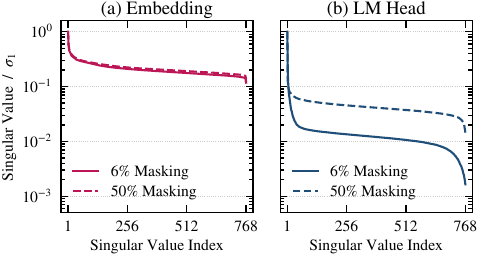}
  \vspace*{-0.8em}
  \caption{Relative singular values of (a) the input embedding matrix and (b) the LM head, for 150M encoders pre-trained at 6\% (solid) and 50\% (dashed) masking. Each spectrum is divided by its own leading singular value.}

  \label{fig:03_08_Figure_Embedding_Spectrals}
  \vspace{-\baselineskip}
\end{wrapfigure}

%% file: Tables/03_01_Table_LM_Head_Decoupling.tex
\begin{wraptable}{r}{0.38\textwidth}
\centering
\renewcommand{\arraystretch}{1.1}
\footnotesize
\begin{tabular}{lcc}
\toprule
\textbf{Masking Ratio} & \textbf{GLUE} & \textbf{BEIR} \\
\midrule
\multicolumn{3}{l}{\textit{\textbf{Single Ratio}}} \\
6\% & \textbf{89.5} & 43.5 \\
15\% & 88.9 & 43.8 \\
30\% & 88.3 & 44.2 \\
50\% & 87.5 & \textbf{45.0} \\
\midrule
\multicolumn{3}{l}{\textit{\textbf{Decaying Ratio}}} \\
$50\% \rightarrow 6\%$ & 88.8 & 43.9 \\
$50\% \rightarrow 15\%$ & 88.6 & 44.1 \\
\midrule
\multicolumn{3}{l}{\textit{\textbf{Mixing Ratios}}} \\
$\mathcal{U}(5\%, 50\%)$ & 88.0 & 44.2 \\
$\{5\%,50\%\}$ & 88.9 & 44.7 \\
\; + Decoupling & \textbf{89.4} & \textbf{45.1} \\
\bottomrule
\end{tabular}
\caption{Comparison of masking ratio mixing for the 150M encoder. Decoupling denotes separate LM heads for each masking ratio.}
\label{tab:03_01_Table_LM_Head_Decoupling}
\end{wraptable}


%% file: Figures/03_09_Figure_Embedding_Geometries_V2.tex
\begin{figure}[h]
  \centering
  \vspace{-0.5\baselineskip}
  \includegraphics[width=1.0\linewidth]{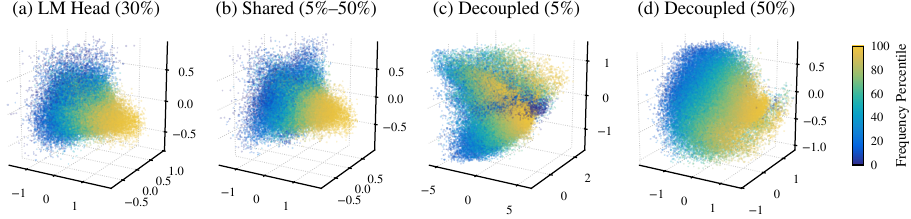}
  \vspace*{-1.7em}
  \caption{
    \textbf{LM head geometry under masking ratio mixing.} First three principal components of the LM head, with each point a vocabulary entry colored by its frequency percentile. (a) An encoder pre-trained at a single 30\% masking ratio, (b) the shared head of $\mathcal{U}(5\%,50\%)$, and (c, d) the decoupled heads of $\{5\%,50\%\}$ for 5\% and 50\% masking, respectively.
}
  \label{fig:03_09_Figure_Embedding_Geometries_V2}
  \vspace{-\baselineskip}
\end{figure}

%% file: Tables/03_02_Table_Distillation.tex
\begin{wraptable}{r}{0.38\textwidth}
\centering
\vspace{-0.8\baselineskip}
\resizebox{0.9\linewidth}{!}{%
\begin{tabular}{lcc}
\toprule
\textbf{Model} & \textbf{GLUE} & \textbf{BEIR} \\
\midrule
\multicolumn{3}{l}{\textit{\textbf{15\% Masking}}} \\
1.3B             & 91.8 & 47.4 \\
380M             & 90.2 & 46.0 \\
380M (Distilled) & 90.8 & 46.0 \\
\midrule
\multicolumn{3}{l}{\textit{\textbf{50\% Masking}}} \\
1.3B             & 90.7 & 48.3 \\
380M             & 89.6 & 47.2 \\
380M (Distilled) & 89.6 & 46.5 \\
\bottomrule
\end{tabular}%
}
\caption{Logit-based distillation from 1.3B to 380M at 15\% and 50\% ratios.}
\label{tab:03_02_Table_Distillation}
\vspace{-0.8\baselineskip}
\end{wraptable}

%% file: Sections/04_UltraBERT.tex
\vspace{-0.5\baselineskip}
\section{UltraBERT: Beyond the Limit of BERT Pre-Training}
\vspace{-0.5\baselineskip}
\input{Figures/04_Model_Diagram}

Building on these findings, we introduce \textbf{UltraBERT}, a family of encoders in two sizes (150M and 380M) with 32K context that pushes the limit of existing encoders on both classification and retrieval without trade-off. Following \citet{minitron} and \citet{ministral3}, we start from the 1.3B encoder as a teacher, prune it to 380M and then further to 150M, and distill both from the 1.3B teacher. Based on Section~\ref{sec:ablation-lm-head-decoupling}, we construct each batch with both 5\% and 50\% masking at a 4:1 ratio, with decoupled LM heads. Following Section~\ref{sec:ablation-distillation}, we apply KL divergence loss only to the 5\% masked sequences, while the 50\% ones are trained with the standard MLM loss. 
Figure~\ref{fig:04_Model_Diagram} illustrates the framework, and the training setup follows Section~\ref{sec:ablation-section} with the remaining details in Appendix~\ref{app:training-setup}.



\subsection{Evaluation Setup}

To evaluate classification from various perspectives, we report GLUE and SQuAD 2.0 as in the ablation study. Furthermore, we include the English subset of MultiCoNER v2~\citep{multiconer2} and Few-NERD~\citep{fewnerd} for the harder NER tasks. We also evaluate on RewardBench~\citep{rewardbench} as a pairwise preference classification task. Each benchmark is fine-tuned with hyperparameter grid search on its training set (see Appendix~\ref{app:eval-protocols}).

For the retrieval tasks, we follow Section~\ref{sec:ablation-section} to report the BEIR score. Using the best model on BEIR, we report the code and math subsets of RAR-b~\citep{rarb} to estimate generalizability in the code and math domains. To evaluate long context ability, we further fine-tune the best model on the English subset of MLDR~\citep{mldr} and report its test set score, alongside its LongEmbed~\citep{longembed} score.

For all tasks, we compare with various model families as baselines: BERT \citep{bert}, RoBERTa \citep{roberta}, MosaicBERT \citep{mosaicbert}, NomicBERT \citep{nomic-embed}, GTE-en-MLM \citep{mgte}, ModernBERT \citep{modernbert}, Ettin Encoder \citep{ettin}, NeoBERT \citep{neobert}, and LFM2.5-Encoder \citep{lfm2.5-encoder}.

\subsection{Results}
\input{Tables/04_Main_Table}

As shown in Table~\ref{tab:04_Main_Table}, UltraBERT achieves state-of-the-art performance on almost all benchmarks. For the base size, UltraBERT achieves the best score on all benchmarks except MultiCoNER v2, where it still outperforms both ModernBERT and Ettin Encoder. For the medium size, UltraBERT achieves the best on all benchmarks except MLDR, where it is behind LFM2.5-Encoder-350M.

As the UltraBERT series natively supports 32K context length, both scales achieve the best score on LongEmbed, where even the base model outperforms the medium models of other families. Furthermore, the largest improvements are on code and math retrieval, with gains of 7.3 and 8.3 points at base and 8.4 and 7.7 points at medium over the second best, respectively.

Most importantly, previous encoders each excel in different areas. On GLUE, SQuAD 2.0, and RewardBench, Ettin Encoder was the best among prior models. On the NER tasks, NomicBERT, RoBERTa, and GTE-en-MLM led. On BEIR, ModernBERT was the strongest base model and LFM2.5-Encoder-350M the strongest medium model. None of them, however, reaches the top across the full range of benchmarks. UltraBERT is the first to do so at both scales, indicating that our recipe yields a truly general encoder rather than one specialized for a single task family.

%% file: Figures/04_Model_Diagram.tex
\begin{figure}[t]
  \centering
  \includegraphics[width=0.85\linewidth]{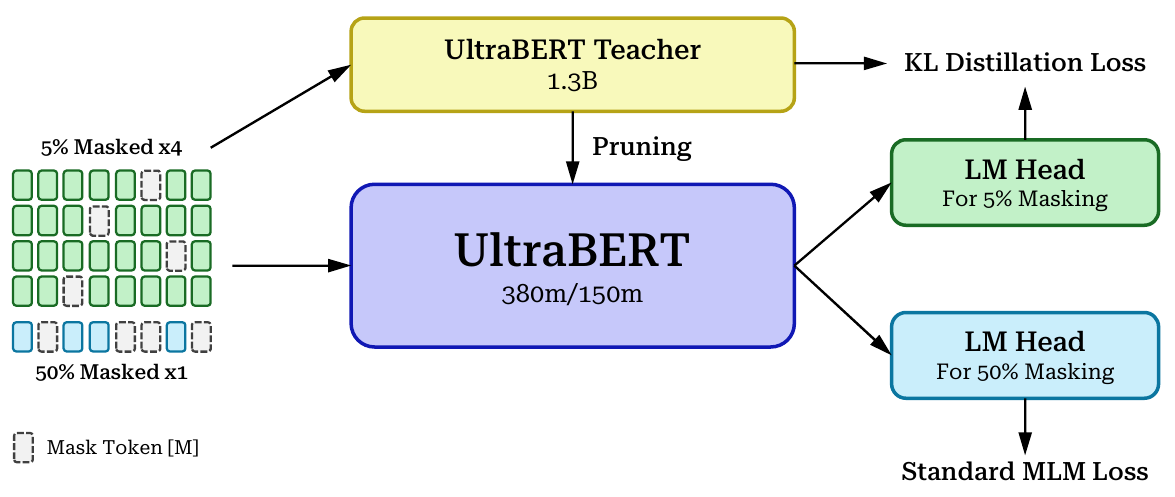}
  \vspace*{-0.5em}
  \caption{
    \textbf{Overview of UltraBERT.} Starting from the 1.3B encoder as a teacher, we prune it to 380M and then further to 150M. Each batch mixes 5\% and 50\% masked sequences at a 4:1 ratio, and each ratio has its own LM head. The 5\% head is trained with the KL distillation loss from the 1.3B teacher, while the 50\% head is trained with the standard MLM loss.
  }
  \label{fig:04_Model_Diagram}
  \vspace{-0.5\baselineskip}
\end{figure}

%% file: Tables/04_Main_Table.tex
\begin{table}[t]
\centering
\setlength{\tabcolsep}{4pt}
\renewcommand{\arraystretch}{1.1}
\vspace{-0.3\baselineskip}
\resizebox{\textwidth}{!}{%
\begin{tabular}{lc|ccccc|ccccc}
\toprule
& & \multicolumn{5}{c|}{\textbf{Classification}} & \multicolumn{5}{c}{\textbf{Retrieval}} \\
\cmidrule(lr){3-7} \cmidrule(lr){8-12}
\textbf{Model} & \textbf{\# Params} & \textbf{GLUE} & \makecell{\textbf{SQuAD}\\\textbf{2.0}} & \makecell{\textbf{Multi-}\\\textbf{CoNER v2}} & \makecell{\textbf{Few-}\\\textbf{NERD}} & \makecell{\textbf{Reward}\\\textbf{Bench}} & \textbf{BEIR} & \makecell{\textbf{RAR-b}\\\textbf{Code}} & \makecell{\textbf{RAR-b}\\\textbf{Math}} & \textbf{MLDR} & \makecell{\textbf{Long}\\\textbf{Embed}} \\
\midrule
\multicolumn{12}{c}{\textit{\textbf{Base ($\leq$200M)}}} \\
\midrule
BERT          & 110M & 84.7 & 76.3 & 53.8 & 66.6 & 59.4 & 36.7 & 6.6  & 27.5 & 29.0 & 29.3 \\
RoBERTa       & 125M & 86.4 & 83.7 & 56.5 & \underline{67.1} & 63.2 & 39.6 & 11.7 & 33.4 & 28.7 & 33.8 \\
MosaicBERT    & 137M & 85.4 & 79.4 & 57.8 & 65.7 & 62.5 & 41.4 & 12.7 & 30.9 & 29.6 & 40.5 \\
NomicBERT     & 137M & 84.0 & 82.7 & \textbf{60.1} & 66.9 & 64.1 & 42.0 & 13.8 & 30.2 & 29.9 & 45.0 \\
GTE-en-MLM    & 137M & 85.6 & 82.7 & \underline{59.2} & 66.4 & 63.4 & 42.4 & 12.0 & 33.6 & 37.1 & 58.5 \\
ModernBERT    & 149M & 88.4 & 86.7 & 57.4 & 66.7 & 68.0 & \underline{42.6} & 29.6 & 46.1 & \underline{37.7} & \underline{63.2} \\
Ettin Encoder & 149M & \underline{88.9} & \underline{87.4} & 57.4 & 66.5 & \underline{70.3} & 42.4 & \underline{31.8} & \underline{47.1} & 35.9 & 60.1 \\
\rowcolor{gray!15}
\textbf{UltraBERT}     & 152M & \textbf{89.5} & \textbf{89.2} & 57.9 & \textbf{67.4} & \textbf{73.0} & \textbf{45.2} & \textbf{39.1} & \textbf{55.4} & \textbf{43.5} & \textbf{72.0} \\
\midrule
\multicolumn{12}{c}{\textit{\textbf{Medium ($\leq$500M)}}} \\
\midrule
BERT                & 334M & 85.2 & 81.8 & 57.7 & 67.8 & 62.6 & 38.6 & 12.2 & 33.0 & 29.8 & 30.0 \\
RoBERTa             & 355M & 88.9 & 89.4 & 62.3 & \underline{68.1} & 68.5 & 42.8 & 20.3 & 37.8 & 29.5 & 32.2 \\
GTE-en-MLM          & 434M & 87.6 & 87.2 & \underline{63.0} & 67.1 & 62.2 & 39.4 & 13.5 & 32.0 & 40.8 & 57.0 \\
NeoBERT             & 222M & 89.0 & 86.6 & 57.4 & 66.4 & 57.6 & 42.9 & 17.3 & 33.4 & 21.7 & 32.4 \\
LFM2.5-Encoder & 230M & 88.2 & 86.1 & 59.6 & 67.6 & 67.5 & 46.7 & 30.9 & 50.2 & 43.7 & 70.0 \\
LFM2.5-Encoder & 355M & 89.2 & 87.2 & 62.7 & 67.9 & 68.5 & \underline{47.3} & \underline{38.6} & 50.2 & \textbf{44.5} & \underline{71.7} \\
ModernBERT          & 395M & 90.4 & 89.8 & 60.9 & 67.6 & 72.8 & 45.2 & 34.0 & 46.1 & 39.9 & 63.5 \\
Ettin Encoder       & 395M & \underline{90.8} & \underline{90.4} & 62.3 & 67.8 & \underline{76.7} & 45.9 & 38.5 & \underline{54.0} & 40.9 & 64.3 \\
\rowcolor{gray!15}
\textbf{UltraBERT}           & 380M & \textbf{91.0} & \textbf{90.9} & \textbf{63.5} & \textbf{68.3} & \textbf{78.8} & \textbf{47.7} & \textbf{47.0} & \textbf{61.7} & \underline{44.1} & \textbf{74.9} \\
\bottomrule
\end{tabular}%
}
\caption{
Comparison with existing encoders on classification and retrieval benchmarks. Parameter counts exclude the LM head. Best scores in each group are in bold and second best are underlined.
}
\label{tab:04_Main_Table}
\vspace{-0.8\baselineskip}
\end{table}

%% file: Sections/05_Conclusion.tex
\vspace{-0.1\baselineskip}
\section{Conclusion}
\vspace{-0.1\baselineskip}

We revisit the pre-training recipe of bidirectional encoders at the scale of trillions of tokens. Encoders saturate on downstream tasks at a few trillion tokens, beyond which
the masking ratio determines their specialization: lower ratios favor classification, higher ratios favor retrieval. We reconcile the two by mixing both ratios within a batch with decoupled LM heads, and by applying logit-based distillation only at the low ratio. Built on these findings, UltraBERT achieves state-of-the-art results across classification and retrieval at both 150M and 380M with a 32K context, showing that the specialization of encoders is not inherent but a consequence of the recipe.

%% file: Sections/06_After_Conclusion.tex
\subsection*{AI use statement}

In this work, we used generative AI tools to assist in writing the proof in Appendix~\ref{app:koleo-erank}, where the structure of the argument was designed by the authors and the AI assisted in formalizing the individual steps. We have not used generative AI tools to develop the conceptual framework, propose hypotheses, design experiments, implement methods, or interpret results; all training and evaluation code was written by the authors. The remaining tasks with required disclosure are not applicable to this work. Additionally, we used generative AI tools to edit the paper for readability, including converting result logs into LaTeX tables, to summarize and identify related literature, and to search for information. We have reviewed all AI-assisted work: the proof in Appendix~\ref{app:koleo-erank} was checked by the authors, all numbers in the tables were verified against the original logs, and all claims about prior work were checked against the original papers. We take responsibility for the final content of this work, including text, claims, and artifacts produced with the aid of generative AI.



\subsection*{Ethics statement}

This work pre-trains and evaluates general-purpose text encoders on publicly available datasets. The pre-training mixture consists of web, code, academic, and instruction data that are publicly released under their respective licenses, and we did not collect data from human subjects. As with any model trained on web-scale data, the encoders may inherit biases present in the training data, and downstream users should evaluate them for their use case. We do not foresee ethical concerns specific to this work beyond those common to encoder pre-training.


\subsection*{Reproducibility statement}

We describe the architecture, optimization, pruning, and distillation in Appendix~A, the pre-training data mixture with per-dataset token counts in Appendix~A.3, and the fine-tuning and evaluation protocols for every benchmark, including hyperparameter grids and data sources, in Appendix~B. The proof of the relation between the KoLeo regularizer and effective rank is given in Appendix~D. Full per-task results are reported in Appendix~E, along with the sources of baseline scores taken from the literature.
We release the evaluation code, model weights and pre-training data mixture.



\subsubsection*{Acknowledgments}

This work was supported by the National Research Foundation of Korea (NRF) grants funded by the Korea government (MSIT) (No. NRF-2023R1A2C3004176 and No. RS-2026-25510253); the ICT Creative Consilience Program through the Institute of Information \& Communications Technology Planning \& Evaluation (IITP) grant funded by the Korea government (MSIT) (No. IITP-2026-RS-2020-II201819); the Ministry of Science and ICT (MSIT) and National IT Industry Promotion Agency (NIPA) through the Hyperscale AI Healthcare Service Support Program (No. H0701-25-1003); and the Korea Health Technology R\&D Project through the Korea Health Industry Development Institute (KHIDI), funded by the Ministry of Health \& Welfare, Republic of Korea (No. RS-2025-25462758).
The pre-training in this work was primarily supported by Cloud TPUs generously provided by Google’s TPU Research Cloud (TRC) program.

%% file: Sections/AA_Training_Setup.tex
\section{Training Setup}
\label{app:training-setup}

\subsection{Architecture and Optimization}




All encoders share the same architecture design. We use RoPE~\citep{rope}, SwiGLU~\citep{swiglu}, and alternating local and global attention similar to \citet{modernbert}, where every third layer applies global attention and the remaining layers apply local attention. The RoPE base is 10K for local attention and 640K for global attention, and the latter is increased to 2.56M during the 32K context extension. The LM head is untied from the input embeddings. We use the same tokenizer of SmolLM2~\citep{smollm2}, with a vocabulary of 49,152 tokens.\footnote{\url{https://huggingface.co/HuggingFaceTB/cosmo2-tokenizer}}

We scale the encoders by width while keeping the depth fixed at 24 layers. The hidden size grows from 768 for 150M to 1,024 for 380M and 2,048 for 1.3B, and the intermediate size of SwiGLU is set to 1,024, 3,072, and 5,632, respectively. The 150M model uses 6 attention heads and the two larger models use 8, with a head dimension of 128 for 150M and 380M and 256 for 1.3B.

We use AdamW with $\beta_1=0.9$, $\beta_2=0.95$, $\epsilon=10^{-8}$, a weight decay of 0.1, and gradient clipping at 1.0. The learning rate follows the warmup-stable-decay schedule~\citep{wsd}, with 10B tokens of warmup and a decay phase over the last 10\% of the token budget. Note that the learning rate is decayed to zero for every model evaluated in this paper, including the intermediate checkpoints in Section~\ref{sec:ablation-token-scaling}, each of which is obtained by branching a decay phase from the stable phase. We use a peak learning rate of 3e-4 for the experiments in Section~\ref{sec:ablation-section}, and 1e-4 for distillation and the final training of UltraBERT. The batch size is 4M tokens, with 2,048 sequences of 2,048 tokens. For UltraBERT, we train on 2T tokens in total, and extend the context length to 32K for the last 100B tokens \citep{prolong}.

Each model is trained on a single pod slice of TPU v4-512, v5e-256, or v6e-256, all with 256 chips. We use 16 data-parallel replicas with 16-way fully sharded data parallelism (FSDP), and for the 32K context extension we reduce FSDP to 8-way and add 2-way tensor parallelism.

\subsection{Pruning and Distillation}
\label{app:pruning-and-distillation}

We derive the 380M and 150M encoders by reusing the best 1.3B encoder in Section~\ref{sec:ablation-masking-ratios-and-classification} as the teacher, through Cascade Distillation~\citep{ministral3}, an iterative process of pruning and distillation: the 1.3B encoder is pruned to 380M and distilled, and the distilled 380M encoder is pruned again to 150M and distilled. We prune only along the width, keeping all 24 layers, following the observation of \citet{minitron} that width pruning outperforms depth pruning at this scale. The 1.3B encoder is pruned to 380M by reducing the hidden size from 2,048 to 1,024, the head dimension from 256 to 128, and the intermediate size from 5,632 to 3,072, and the 380M encoder is pruned to 150M by reducing the hidden size to 768, the number of attention heads from 8 to 6, and the intermediate size to 1,024.

Following \citet{minitron}, we estimate the importance of hidden channels, attention heads, and FFN neurons from activations with forward passes only. As a calibration set, we take 2,000 documents from the pre-training mixture and pack them into 1,024 sequences of 1,024 tokens. For hidden channels, we use the output of the pre-attention normalization in each layer; for attention heads, the per-head output entering the output projection; and for FFN neurons, the gated activation entering the down projection. In all three cases, the activation magnitude is averaged over the sequence and then aggregated by the L2 norm over the batch. Hidden channel scores are further aggregated over layers, since the residual stream is shared across the network, whereas head and neuron scores are kept per layer. We keep the top-scoring channels, heads, and neurons.

When the head dimension is halved from 256 to 128, we reduce the query and key projections by keeping every other rotary frequency pair, so that the pairing of RoPE dimensions is preserved, and scale the query by $1/\sqrt{2}$ to keep the attention logits at the same magnitude. For the value and output projections, we apply a shared random orthogonal projection to the output of the value and the input of the output projection, which preserves their product in expectation.

Each pruned encoder is then distilled from the 1.3B teacher. As described in Section~\ref{sec:ablation-distillation}, we apply the KL divergence between the teacher and student logits on the 5\% masked sequences, and the standard MLM loss on the 50\% masked sequences. The 150M encoder is distilled from the same 1.3B teacher rather than from the 380M encoder, following \citet{minitron} and \citet{ministral3}, who found that distilling from the original parent gives better results than distilling from the intermediate child.

\subsection{Dataset Mixture}

\input{Tables/AA_Dataset_Mixture}

Table~\ref{tab:app-data-mixture} lists the pre-training data mixture. Web data takes about two thirds of the budget, followed by code, academic papers, and mathematics. We also include a small portion of instruction data: FLAN and StackExchange, as in \citet{ettin}, and Natural Reasoning. Although it is only 2.5\% of the mixture, the encoder learns to follow instructions to some extent, which explains why the base model already works in the zero-shot and few-shot MCQA settings of Appendix~\ref{app:zero-few-shot-mcqa} and Figure~\ref{fig:03_02_Figure_Encoder_vs_Decoder}. The model recognizes the MCQA format without any tuning, and the instruction data works well through the \texttt{[MASK]} position. This goes beyond the observation of \citet{bert-icl} that MLM-trained encoders can act as in-context learners: with instruction data in the mixture, the encoder follows instructions even in the zero-shot setting without demonstrations.

We also explored multi-stage training that adjusts the mixture ratios across stages, as in Ettin~\citep{ettin} and EuroBERT~\citep{eurobert}. This is common practice for decoder LLMs~\citep{olmo2,smollm2}, where upsampling high-quality data in the final stage has been shown to be effective. In our preliminary experiments with several mixture schedules, however, the downstream performance after fine-tuning was barely affected. We hypothesize that the exposure ratio of each dataset changes the token prediction distribution of the model, and thus its apparent zero-shot and few-shot performance, but has little effect on the intrinsic linguistic knowledge that fine-tuning builds on. We therefore use a single mixture throughout training.

%% file: Tables/AA_Dataset_Mixture.tex
\begin{table}[h]
\centering
\setlength{\tabcolsep}{6pt}
\renewcommand{\arraystretch}{1.1}
\resizebox{\textwidth}{!}{%
\begin{tabular}{llrrrr}
\toprule
\textbf{Category} & \textbf{Dataset} & \textbf{Unique (B)} & \textbf{Epochs} & \textbf{Trained (B)} & \textbf{Ratio (\%)} \\
\midrule
\multirow{2}{*}{\makecell[l]{Web\\(66.9\%)}}
  & DCLM~\citep{dclm}                           & 756.4 & 1.17 & 881.7 & 44.1 \\
  & FineWeb-Edu~\citep{fineweb-edu}             & 195.4 & 2.33 & 455.5 & 22.8 \\
\midrule
\makecell[l]{Code\\(14.0\%)}
  & Stack-Edu~\citep{smollm2}                   & 120.3 & 2.33 & 280.5 & 14.0 \\
\midrule
\makecell[l]{Academic\\(7.1\%)}
  & peS2o~\citep{pes2o}                         &  60.7 & 2.33 & 141.6 &  7.1 \\
\midrule
\multirow{3}{*}{\makecell[l]{Math\\(5.8\%)}}
  & MegaMath~\citep{megamath}                   &  20.9 & 2.80 &  58.4 &  2.9 \\
  & FineMath~\citep{smollm2}                    &  10.9 & 2.80 &  30.5 &  1.5 \\
  & InfiWebMath~\citep{smollm2}             &   9.6 & 2.80 &  26.9 &  1.3 \\
\midrule
\multirow{2}{*}{\makecell[l]{Reference\\(3.7\%)}}
  & Cosmopedia v2~\citep{smollm-corpus}         &  28.1 & 2.33 &  65.6 &  3.3 \\
  & Wikipedia~\citep{dolma}                     &   3.9 & 2.33 &   9.2 &  0.5 \\
\midrule
\multirow{3}{*}{\makecell[l]{Instruction\\(2.5\%)}}
  & FLAN~\citep{flan}                           &  19.0 & 2.33 &  44.4 &  2.2 \\
  & StackExchange~\citep{redpajama}             &   1.5 & 2.33 &   3.4 &  0.2 \\
  & Natural Reasoning~\citep{natural-reasoning} &   1.0 & 2.33 &   2.4 &  0.1 \\
\midrule
\multicolumn{2}{l}{\textbf{Total}}              & 1,227.8 & 1.63 & 2,000.0 & 100.0 \\
\bottomrule
\end{tabular}%
}
\caption{\textbf{Pre-training data mixture.} Unique denotes the number of tokens in one epoch of each dataset, and Trained the number of tokens seen within the 2T-token budget of UltraBERT, both counted with our tokenizer. Ratios in parentheses are per category.}
\label{tab:app-data-mixture}
\end{table}

%% file: Sections/AA_Evaluation_Protocols.tex
\section{Evaluation Protocols}
\label{app:eval-protocols}

In this section, we detail the fine-tuning and evaluation protocols used throughout the paper. For the baselines in Table~\ref{tab:04_Main_Table}, we take scores from the literature where available, such as GLUE and SQuAD 2.0, and fine-tune and evaluate the rest ourselves under the same protocol.

\subsection{Classification Tasks}
\label{app:eval-protocols-classification}

For every fine-tuning run, we use AdamW with $\beta_1{=}0.9$, $\beta_2{=}0.98$, $\epsilon{=}10^{-6}$, weight decay 0.1, gradient clipping at 1.0, a linear learning rate schedule with 5\% warm-up, and bfloat16 mixed precision. Following the common practice of prior encoders \citep{roberta,modernbert,ettin}, we perform a grid search per task and per model. The learning rate is always swept over $\{5\text{e-}6, 8\text{e-}6, 1\text{e-}5, 3\text{e-}5, 5\text{e-}5, 8\text{e-}5\}$, jointly with the task-specific maximum lengths, batch sizes, and epochs in Table~\ref{tab:eval-grid}. During each run, the model is evaluated on the development set periodically and the best checkpoint is kept. We then select the best configuration on the development set and report its score; for tasks with a labeled test set (MultiCoNER~v2, Few-NERD), the test score of the selected configuration is reported.

\paragraph{GLUE}
We evaluate on the 8 GLUE tasks \citep{glue} excluding WNLI, as in \citet{bert}, and report development-set scores: Matthews correlation for CoLA, Pearson correlation for STS-B, accuracy on the matched set for MNLI, and accuracy for the remaining tasks. Inputs are truncated to 128 tokens. Following \citet{roberta}, RTE, MRPC, and STS-B are initialized from the best MNLI checkpoint of the same model instead of the pre-trained weights. The large tasks (MNLI, QQP, QNLI, SST-2) use a batch size of 64 with $\{1,2,3\}$ epochs. The small tasks (CoLA, STS-B, MRPC, RTE) are known to be unstable \citep{mosaicbert,nomic-embed}, so we sweep the batch size and $\{2,5,10\}$ epochs with 4 seeds, take the best development score over the grid for each seed, and report the median across the seeds.

\paragraph{SQuAD 2.0}
We fine-tune on SQuAD~2.0~\citep{squad2} with the same protocol as GLUE. Question--context pairs are truncated to 384 tokens with a document stride of 128. We sweep the batch size over $\{16, 32, 64\}$ and the number of epochs over $\{1, 2, 3\}$, and report the F1 score on the development set of the best configuration.

\paragraph{Named Entity Recognition}
We adopt two fine-grained NER benchmarks, which are known to be more challenging and thus discriminative among models. MultiCoNER v2~\citep{multiconer2} contains short, lower-cased sentences with 33 entity types; we use the English subset (16,778 / 871 / 249,980 sentences for training, development, and test). Since the development set is small, every configuration is run with 3 seeds, and the average development score is used for selection. Few-NERD~\citep{fewnerd} is used in its supervised setting with 66 entity types and the official split (131,767 / 18,824 / 37,648). To avoid tokenization mismatches with byte-level BPE tokenizers, we prepend a leading space to each word so that the pre-tokenized encoding is identical to that of the raw string. We compute entity-level F1 and report the test scores of the selected configurations: macro-F1 over entity types for MultiCoNER v2 and micro-F1 for Few-NERD.

\paragraph{RewardBench}
To examine encoders as reward models, we fine-tune with the Bradley-Terry loss. The training pairs are the union of \path{HuggingFaceH4/UltraFeedback-binarized} \citep{ultrafeedback} and \path{Skywork/Skywork-Reward-Preference-80K-v0.2} \citep{skywork-reward}, restricted to single-turn conversations. This yields 135,981 pairs, of which 5\% (6,800 pairs) are held out as a development set. Prompts and responses are concatenated with a plain template (\texttt{USER:} and \texttt{ASSISTANT:}) and truncated to 1,024 tokens. Within each run, the epoch checkpoint with the lowest loss on the held-out pairs is kept. We compute the accuracy of the 4 sections (Chat, Chat Hard, Safety, and Reasoning) with the official per-subset weighting, and report their average.

\input{Tables/AA_Finetuning_Grids}

\subsection{Retrieval Tasks}
\label{app:eval-protocols-retrieval}

For retrieval, we fine-tune each encoder as a single-vector dense retriever using mean pooling and the cached multiple negatives ranking loss~\citep{gradcache}. We use AdamW with $\beta_1{=}0.9$, $\beta_2{=}0.999$, $\epsilon{=}10^{-8}$ and no weight decay, together with gradient clipping at 1.0, a linear learning rate schedule with 5\% warm-up, and bfloat16 mixed precision. Every model is trained with a per-device batch size of 512 on 8~GPUs, and evaluated with the \texttt{mteb} library~\citep{mteb}.

\paragraph{BEIR and RAR-b} Following the training setup of \citet{modernbert} and \citet{ettin}, we fine-tune on MS~MARCO~\citep{msmarco} with 1.25M hard-negative triplets from \path{sentence-transformers/msmarco-co-condenser-margin-mse-sym-mnrl-mean-v1}, using one epoch and a maximum length of 256 tokens. The learning rate is swept over $\{1\text{e-}5,\allowbreak 3\text{e-}5,\allowbreak 5\text{e-}5,\allowbreak 8\text{e-}5,\allowbreak 1\text{e-}4,\allowbreak 2\text{e-}4,\allowbreak 3\text{e-}4,\allowbreak 5\text{e-}4,\allowbreak 8\text{e-}4,\allowbreak 1\text{e-}3\}$, and the checkpoint with the highest score on BEIR is selected to measure the best attainable retrieval performance of each backbone. We evaluate on the 15 BEIR datasets~\citep{beir}, using the development split for MS~MARCO and the mean over the 12 forums for CQADupstack, and report the average nDCG@10.

To estimate generalization to the code and math domains, the same checkpoint is evaluated on the code and math subsets of RAR-b~\citep{rarb}, which cast reasoning problems as retrieval. The code subset is curated from HumanEvalPack \citep{humanevalpack}, MBPP \citep{mbpp}, CodeSearchNet \citep{codesearchnet}, and TinyCode\footnote{\url{https://huggingface.co/datasets/nampdn-ai/tiny-codes}}. The math subset is curated from MATH \citep{math}, GSM8K \citep{gsm8k}, and MetaMathQA \citep{metamathqa}. For each query, the ground-truth solution is retrieved from 301K and 389K candidates, respectively, and we report nDCG@10.

\paragraph{MLDR and LongEmbed} As MS~MARCO passages are short, we further fine-tune the selected BEIR checkpoint on the English training triplets of MLDR~\citep{mldr}, following the in-domain setting of \citet{modernbert}. We use 80K triplets from \path{sentence-transformers/mldr} and train with the same learning rate as the MS~MARCO stage, with the maximum length set to the context length of each model, capped at 16K tokens. The resulting model is evaluated on the English test set of MLDR and on the 6 tasks of LongEmbed~\citep{longembed}, with the maximum length again set to the context length of each model, capped at 32K tokens. For LongEmbed, the 4 real tasks (NarrativeQA, QMSum, SummScreenFD, and 2WikiMultihopQA) are scored by nDCG@10, and the 2 synthetic tasks (Passkey and Needle) by recall@1 averaged over the 8 context lengths from 256 to 32K tokens. We report nDCG@10 on MLDR and the average over the 6 tasks on LongEmbed.

\subsection{Zero-Shot and Few-Shot MCQA}
\label{app:zero-few-shot-mcqa}

To compare encoders and decoders on LLM-style benchmarks in Section~\ref{sec:ablation-token-scaling}, we evaluate 6 multiple-choice question answering (MCQA) benchmarks: ARC \citep{arc}, BoolQ \citep{boolq}, CommonsenseQA \citep{commonsenseqa}, MMLU \citep{mmlu}, OpenBookQA \citep{openbookqa}, and SIQA \citep{siqa}. We use the test sets of ARC, MMLU, and OpenBookQA, and the validation sets of BoolQ, CommonsenseQA, and SIQA, whose test labels are not public. We report accuracy, averaged over the 57 subjects for MMLU and over the Easy and Challenge sets for ARC, and the final score is the average over the 6 benchmarks.

\paragraph{Prompting and Scoring}
Each question is formatted with the prefix (\texttt{Question:}) followed by the options, and the suffix (\texttt{Answer:}). For BoolQ and SIQA, which come with contexts, the context is additionally prepended with the prefix \texttt{Context:}.
For the few-shot setting, the first 5 examples of the development set of the same subject for MMLU, the validation set for ARC and OpenBookQA, and the training set for the others are prepended as demonstrations in the same format. Encoders and decoders share the prompt and differ only in how the answer is read out. For encoders, a single \texttt{[MASK]} token is appended after \texttt{Answer:} and the MLM logits at that position are compared over the option letters. For decoders, the next-token logits at the last position are used instead. For BoolQ, we compare the summed probabilities of the yes and no variants (\texttt{Yes}, \texttt{yes}, \texttt{True}, \texttt{true} vs. \texttt{No}, \texttt{no}, \texttt{False}, \texttt{false}). Inputs are truncated from the left to 2,048 tokens.

\paragraph{Instruction Tuning} Base models are evaluated in the 5-shot setting, and instruction-tuned models in the zero-shot setting. For instruction tuning, we use a subset of the FLAN collection~\citep{flan} with zero-shot templates that present answer options, sampling 100 to 1,000 examples from each of 718 tasks in Flan2021, P3, and NIv2, for a total of 545K examples. We keep the examples whose answers are at most 5 tokens long (480K), and hold out 1\% for validation. Encoders are trained by appending as many \texttt{[MASK]} tokens as the answer length and predicting the answer tokens with the MLM head, and decoders with the standard causal LM loss on the answer tokens. Both are trained for 3 epochs with a batch size of 128, a learning rate of 5e-5, weight decay 0.01, a linear schedule with 10\% warm-up, a maximum length of 256 tokens with left truncation, and bfloat16 mixed precision.

%% file: Tables/AA_Finetuning_Grids.tex
\begin{table}[h]
\centering
\small
\renewcommand{\arraystretch}{1.05}
\begin{tabular}{lcccl}
\toprule
\textbf{Task} & \textbf{Max Length} & \textbf{Batch Size} & \textbf{Epochs} & \textbf{Metric} \\
\midrule
MNLI, QQP, QNLI, SST-2 & 128 & 64 & \{1,\,2,\,3\} & Accuracy \\
CoLA, STS-B, MRPC, RTE & 128 & \{16,\,32,\,64\} & \{2,\,5,\,10\} & MCC / Pearson / Acc. \\
SQuAD 2.0 & 384 & \{16,\,32,\,64\} & \{1,\,2,\,3\} & F1 \\
MultiCoNER v2 & 128 & 16 & \{2,\,5,\,10\} & Macro-F1 \\
Few-NERD & 128 & 64 & \{1,\,2,\,3\} & Micro-F1 \\
RewardBench & 1024 & 64 & \{1,\,2,\,3\} & Section average \\
\bottomrule
\end{tabular}
\caption{Fine-tuning grids. The learning rate is swept over the 6 values in Section~\ref{app:eval-protocols-classification} for every task.}
\label{tab:eval-grid}
\end{table}

%% file: Sections/AA_RTD_Analysis.tex
\section{Additional Analysis on Replaced Token Detection}
\label{app:rtd-analysis}

\subsection{CoLA as a Confounder}
\label{app:cola-confounder}

It has long been observed that RTD-trained encoders are particularly strong on CoLA. \citet{electra-cola} found that ELECTRA outperforms MLM and random-substitution variants by a wide margin only on CoLA, RTE, and STS-B, while the gap is not significant on the more stable tasks, and \citet{mcbert} attributed the gain to the pre-training task of ELECTRA being more aligned with CoLA.

\begin{wraptable}{r}{0.4\textwidth}
\centering
\vspace{-\baselineskip}
\setlength{\tabcolsep}{8pt}
\renewcommand{\arraystretch}{1.1}
\resizebox{\linewidth}{!}{%
\begin{tabular}{lcc}
\toprule
\textbf{Model} & \makecell{\textbf{GLUE}\\\textbf{w/o CoLA}} & \textbf{CoLA} \\
\midrule
50\% MLM   & 91.2 & 61.6 \\
\, + RTD 100B & 91.7 & 71.5 \\
\bottomrule
\end{tabular}%
}
\caption{Effect of continued RTD training. Starting from the 150M pre-trained with 50\% masking, we continue training with RTD for 100B tokens.}
\label{tab:app-cola-confounder}
\vspace{-\baselineskip}
\end{wraptable}
We observe the same in a controlled setting. Starting from the 150M encoder pre-trained with 50\% masking in Section~\ref{sec:ablation-masking-ratios-and-classification}, we continue training with the RTD objective for only 100B tokens. As shown in Table~\ref{tab:app-cola-confounder}, the average of the other GLUE tasks moves by 0.5 points, whereas CoLA jumps by 9.9 points, from 61.6 to 71.5. A short exposure to RTD is enough to change the CoLA score by an amount that no other task shows.

CoLA asks whether a sentence is linguistically acceptable, and RTD trains the encoder to judge whether each token is plausible in its context. The two are the same kind of judgment, which explains why the CoLA score reflects the form of the pre-training objective rather than the general quality of the encoder. For this reason, we exclude CoLA when comparing MLM and RTD in Section~\ref{sec:ablation-masking-ratios-and-classification}.



\subsection{Comparison with RTD-Trained Encoders}
\label{app:rtd-comparison}

Following \citet{mmbert}, we do not include RTD-trained encoders such as DeBERTaV3~\citep{debertav3} in Table~\ref{tab:04_Main_Table}. Prior work has consistently found that they fall far behind on retrieval despite their strength on classification~\citep{modernbert,neobert,modernbert-or-debertav3,eurobert,mmbert}.

For brevity, we report only the classification and retrieval averages of DeBERTaV3 here, computed as in Figure~\ref{fig:01_Main_Comparison}. DeBERTaV3-base achieves 74.2 on classification and 20.7 on retrieval, whereas UltraBERT-150M achieves 75.4 and 45.3, respectively. DeBERTaV3-large achieves 78.1 and 25.8, whereas UltraBERT-380M achieves 78.5 and 48.7. Consistent with prior work, we observe substantially lower retrieval performance, with gaps of 24.6 and 22.9 points at the two sizes.




%% file: Sections/AA_KoLeo_Effective_Rank.tex
\section{Why KoLeo Increases Effective Rank}
\label{app:koleo-erank}

In this section, we show that the KoLeo regularizer used in Section~\ref{sec:ablation-masking-ratios-and-retrieval} increases the effective rank of representations. The argument proceeds in three steps: KoLeo maximizes the entropy of the representation distribution, near-maximal entropy forces the second moment to be near-isotropic, and near-isotropy flattens the singular value spectrum, which maximizes the effective rank.

\paragraph{Setup}
Let $Z\in\mathbb R^{n\times d}$ stack the $\ell_2$-normalized representations
$z_1,\dots,z_n\in\mathbb S^{d-1}$ of a batch, drawn from the representation
distribution $P$, and let $\Sigma=\mathbb E_P[zz^\top]$ with eigenvalues
$\lambda$. Since $\lVert z\rVert=1$, $\operatorname{tr}\Sigma=1$, so $\lambda$
lies on the probability simplex. Let $U$ be the uniform distribution on
$\mathbb S^{d-1}$, whose second moment is $\tfrac1d I_d$ by symmetry.

\begin{fact}[KoLeo estimates entropy]\label{fact:koleo}
Up to a positive scaling and an additive constant,
$\mathcal L_{\mathrm{KoLeo}}$ is the negative Kozachenko--Leonenko estimate of
the entropy of $P$ \citep{koleo-survey,koleo}.
Minimizing it therefore maximizes the (estimated) entropy of $P$, that is,
it minimizes $\mathrm{KL}(P\,\Vert\,U)$, its divergence from the
entropy-maximizing uniform distribution.
\end{fact}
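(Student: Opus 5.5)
To prove Fact~\ref{fact:koleo}, I would first write out the two objects explicitly and then match them term by term. The KoLeo regularizer of \citet{koleo} on a batch is
\[
\mathcal L_{\mathrm{KoLeo}} = -\frac1n\sum_{i=1}^n \log \rho_{n,i},\qquad \rho_{n,i}=\min_{j\neq i}\lVert z_i-z_j\rVert .
\]
The Kozachenko--Leonenko nearest-neighbour estimator of the differential entropy of a density on a $D$-dimensional space is
\[
\hat H_n = \frac{D}{n}\sum_{i=1}^n \log \rho_{n,i} + \log V_D + \log(n-1) + \gamma ,
\]
where $V_D$ is the unit-ball volume and $\gamma$ the Euler constant; this is the form recorded in \citet{koleo-survey}. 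Since the $z_i$ live on $\mathbb S^{d-1}$, I would take $D=d-1$ and use the chordal distance as a local proxy for geodesic distance.

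The estimator was derived in the Euclidean setting, so the first step is to justify applying it on the sphere. Small chordal balls on the sphere have volume $V_{d-1}\rho^{d-1}(1+O(\rho^2))$. The nearest-neighbour radii shrink like $n^{-1/(d-1)}$, so the correction contributes only $o(1)$ to the average. Comparing the two displays then gives
\[
\hat H_n = -(d-1)\,\mathcal L_{\mathrm{KoLeo}} + c_{n,d},
\]
where $c_{n,d}$ does not depend on the representations. This is exactly ``up to a positive scaling $(d-1)$ and an additive constant''. For a fixed batch size, minimizing $\mathcal L_{\mathrm{KoLeo}}$ is therefore equivalent to maximizing $\hat H_n$. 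By the known consistency of the KL estimator under mild density conditions, $\hat H_n$ tends to $h(P)$, the differential entropy of $P$ with respect to surface measure $\sigma$.

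The second claim follows from an identity. If $P$ has density $p$ with respect to $\sigma$ and $U=\sigma/\sigma(\mathbb S^{d-1})$, then
\[
\mathrm{KL}(P\Vert U)=\int p\log p\,d\sigma+\log\sigma(\mathbb S^{d-1}) = -h(P)+\log\sigma(\mathbb S^{d-1}).
\]
Maximizing $h(P)$ is thus the same as minimizing $\mathrm{KL}(P\Vert U)$. Because KL is nonnegative, $h(P)\le\log\sigma(\mathbb S^{d-1})$, with equality only at $P=U$; so $U$ is indeed the entropy maximizer.

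The main obstacle is the word ``estimated''. The equivalence with the true entropy holds only asymptotically, and only when $P$ has a density on the sphere. For finite $n$, or when $P$ is singular (for example, concentrated on a lower-dimensional subspace), $\hat H_n$ need not track $h(P)$. I would handle this by stating the Fact as exact for the estimator and only asymptotic for the true entropy, citing \citet{koleo-survey} for the convergence conditions instead of reproving them.
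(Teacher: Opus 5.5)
Your argument is correct, and it does more than the paper does. The paper gives no proof of this Fact: it relies on the citations to Beirlant et al.\ and Sablayrolles et al.\ and leaves the ``that is'' step implicit. Your two ingredients are exactly what those citations stand in for. First, $\mathcal L_{\mathrm{KoLeo}}=-\frac1n\sum_i\log\rho_{n,i}$ is a positively scaled, shifted copy of the negative Kozachenko--Leonenko estimator. Second, $\mathrm{KL}(P\,\Vert\,U)=\log\sigma(\mathbb S^{d-1})-h(P)$.

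Your handling of the sphere adds something the paper lacks. The affine identity is exact by construction for any dimension parameter $D$. Your chordal-ball correction therefore belongs to the consistency step, not to the identity itself. The passage to $\mathrm{KL}(P\,\Vert\,U)$, however, only makes sense with your intrinsic choice $D=d-1$ and entropy taken relative to surface measure. With the ambient $D=d$ of the Euclidean estimator, $\hat H^{(d)}_n$ targets Lebesgue differential entropy on $\mathbb R^d$. That entropy is $-\infty$ for every $P$ supported on the sphere; indeed $\hat H^{(d)}_n$ behaves like $-\tfrac{1}{d-1}\log n+O(1)$. So the paper's chain from ``estimated entropy'' to $\mathrm{KL}(P\,\Vert\,U)$ tacitly needs your reformulation.

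Two caveats remain. Beirlant et al.\ treat the Euclidean case, so consistency of $\hat H_n$ on $\mathbb S^{d-1}$ needs more than that citation. You would need a manifold version of the Kozachenko--Leonenko consistency theorem, or a rigorous version of your chordal reduction. Also, $\rho_{n,i}$ shrinks only like $n^{-1/(d-1)}$, and $d$ is in the hundreds here, so realistic batch sizes are far from the asymptotic regime. This supports your choice to state the Fact as exact for the estimator and only asymptotic for the true entropy.
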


\begin{proposition}[Near-maximal entropy forces near-isotropy]\label{prop:gap}
\[
  \bigl\lVert \Sigma-\tfrac1d I_d\bigr\rVert_{2}
  \;\le\; \sqrt{\mathrm{KL}(P\,\Vert\,U)/2}.
\]
\end{proposition}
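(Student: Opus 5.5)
The plan is to bound the spectral-norm gap by a total-variation distance and then apply Pinsker's inequality. The key observation is that $\Sigma-\tfrac1d I_d=\mathbb E_P[zz^\top]-\mathbb E_U[zz^\top]$ is a difference of expectations of one bounded matrix-valued function, $f(z)=zz^\top$, under two distributions on $\mathbb S^{d-1}$. Since this difference is symmetric, its spectral norm is a supremum of quadratic forms:
\[
\bigl\lVert \Sigma-\tfrac1d I_d\bigr\rVert_2=\sup_{\lVert v\rVert=1}\bigl|\mathbb E_P[(v^\top z)^2]-\mathbb E_U[(v^\top z)^2]\bigr|.
\]
So it suffices to fix a unit vector $v$ and bound the scalar gap $|\mathbb E_P g-\mathbb E_U g|$, where $g(z)=(v^\top z)^2$.

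For fixed $v$, the function $g$ takes values in $[0,1]$ on the sphere, because $\lVert z\rVert=1$. For any measurable $g$ with range in an interval of length $1$, the standard coupling (or Hahn decomposition) bound gives
\[
|\mathbb E_P g-\mathbb E_U g|\le \mathrm{TV}(P,U)=\sup_A|P(A)-U(A)|.
\]
To see this, subtract the constant $\tfrac12$, so that $|g-\tfrac12|\le\tfrac12$. Then $|\int (g-\tfrac12)\,d(P-U)|\le \tfrac12\lVert P-U\rVert_1=\mathrm{TV}(P,U)$. Taking the supremum over $v$ yields $\lVert \Sigma-\tfrac1d I_d\rVert_2\le \mathrm{TV}(P,U)$.

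The final step is Pinsker's inequality, $\mathrm{TV}(P,U)\le\sqrt{\mathrm{KL}(P\Vert U)/2}$, which gives the stated bound directly. If $\mathrm{KL}(P\Vert U)=\infty$ the claim is vacuous, so we may assume $P\ll U$.

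The main point of care is the normalization. The factor $\tfrac12$ inside the square root relies on $\mathrm{TV}$ being defined as $\sup_A|P(A)-U(A)|=\tfrac12\lVert P-U\rVert_1$. It also relies on centering $g$ at $\tfrac12$. Without that centering, $g$ would only be bounded by $1$ in absolute value, and we would lose a factor of $2$.

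A second point is that the statement is about the population second moment $\Sigma$, not the batch estimate built from $Z$. The proposition should therefore be read at the level of distributions, with Fact~\ref{fact:koleo} supplying the link between KoLeo minimization and small $\mathrm{KL}(P\Vert U)$. No further estimation argument is needed for the inequality itself.
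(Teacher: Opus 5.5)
Your proposal is correct and follows essentially the same route as the paper's proof. Both use the test function $(v^\top z)^2 \in [0,1]$ centered at $\tfrac12$ to get the total variation bound, then Pinsker's inequality, and finally take the supremum of quadratic forms to recover the spectral norm of the symmetric matrix $\Sigma-\tfrac1d I_d$.
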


\begin{proof}
Fix a unit vector $v\in\mathbb R^d$ and consider the test function
$g_v(z)=(v^\top z)^2$. By Cauchy--Schwarz,
$0\le (v^\top z)^2\le\lVert v\rVert^2\lVert z\rVert^2=1$ for
$z\in\mathbb S^{d-1}$, so $g_v$ takes values in $[0,1]$. Its means under the
two distributions are
\[
  \mathbb E_P[g_v]
  =\mathbb E_P\bigl[v^\top zz^\top v\bigr]
  =v^\top\Sigma v,
  \qquad
  \mathbb E_U[g_v]
  =v^\top\bigl(\tfrac1d I_d\bigr)v
  =\tfrac1d,
\]
using the second moment of $U$ from the setup and $\lVert v\rVert=1$.
Subtracting the two identities gives
\[
  \bigl|v^\top\bigl(\Sigma-\tfrac1d I_d\bigr)v\bigr|
  =\bigl|\mathbb E_P[g_v]-\mathbb E_U[g_v]\bigr|.
\]
We now bound the right-hand side. Since $P$ and $U$ are probability measures,
$\int c\,\mathrm d(P-U)=0$ for any constant $c$, so replacing $g_v$ by the
centered function $\tilde g_v=g_v-\tfrac12$ leaves the difference of means
unchanged while ensuring $\lvert\tilde g_v\rvert\le\tfrac12$. Hence
\[
  \bigl|\mathbb E_P[g_v]-\mathbb E_U[g_v]\bigr|
  =\Bigl|\int\tilde g_v\,\mathrm d(P-U)\Bigr|
  \le\tfrac12\,\lVert P-U\rVert_1
  =\mathrm{TV}(P,U),
\]
where the last identity is the $\ell_1$ characterization of the total
variation distance. Pinsker's inequality \citep{pinsker} bounds
this in turn by the KL divergence,
$\mathrm{TV}(P,U)\le\sqrt{\mathrm{KL}(P\,\Vert\,U)/2}$. Combining the
displays, every unit vector $v$ satisfies
$\lvert v^\top(\Sigma-\tfrac1d I_d)v\rvert
\le\sqrt{\mathrm{KL}(P\,\Vert\,U)/2}$. Since $\Sigma-\tfrac1d I_d$ is
symmetric, the spectral theorem gives
$\lVert\Sigma-\tfrac1d I_d\rVert_2
=\sup_{\lVert v\rVert=1}\lvert v^\top(\Sigma-\tfrac1d I_d)v\rvert$, and taking
the supremum over $v$ in the bound above proves the claim.
\end{proof}

\begin{proposition}[Isotropy deviation equals spectral deviation]\label{prop:flat}
\[
  \bigl\lVert\Sigma-\tfrac1d I_d\bigr\rVert_{2}
  \;=\;\max_k\bigl|\lambda_k-\tfrac1d\bigr|.
\]
\end{proposition}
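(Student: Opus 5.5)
The plan is to diagonalize $\Sigma$ and use that the identity commutes with every orthogonal change of basis. Since $\Sigma=\mathbb E_P[zz^\top]$ is a real symmetric (indeed positive semidefinite) matrix, the spectral theorem gives an orthogonal $Q$ with $\Sigma=Q\Lambda Q^\top$, where $\Lambda=\operatorname{diag}(\lambda_1,\dots,\lambda_d)$ collects the eigenvalues from the setup. Because $Q^\top Q=I_d$, we can write $\tfrac1d I_d=Q\bigl(\tfrac1d I_d\bigr)Q^\top$. Hence
\[
  \Sigma-\tfrac1d I_d=Q\bigl(\Lambda-\tfrac1d I_d\bigr)Q^\top ,
\]
so $\Sigma-\tfrac1d I_d$ is symmetric, with the same eigenvectors as $\Sigma$ and eigenvalues $\lambda_k-\tfrac1d$.

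The second step is to compute the spectral norm of this matrix. There are two routes, and I would use the first with the second as a check.

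\textbf{First route (orthogonal invariance).} The operator $2$-norm is invariant under left and right multiplication by orthogonal matrices, since $\lVert Qx\rVert=\lVert x\rVert$. Therefore $\lVert\Sigma-\tfrac1d I_d\rVert_2=\lVert\Lambda-\tfrac1d I_d\rVert_2$. The spectral norm of a diagonal matrix $D=\operatorname{diag}(d_k)$ is $\max_k|d_k|$. For the upper bound, $\lVert Dx\rVert^2=\sum_k d_k^2x_k^2\le\max_k d_k^2\,\lVert x\rVert^2$. For the lower bound, take $x=e_{k^\ast}$ at the index attaining the maximum.

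\textbf{Second route (Rayleigh quotient).} This matches the proof of Proposition~\ref{prop:gap}. Write $v=Qw$ with $\lVert w\rVert=1$. Then
\[
  v^\top\bigl(\Sigma-\tfrac1d I_d\bigr)v=\sum_k\bigl(\lambda_k-\tfrac1d\bigr)w_k^2 .
\]
This is a convex combination of the values $\lambda_k-\tfrac1d$, so its absolute value is at most $\max_k|\lambda_k-\tfrac1d|$. Equality is attained at the corresponding eigenvector. Combined with the identity $\lVert A\rVert_2=\sup_{\lVert v\rVert=1}|v^\top Av|$ for symmetric $A$, which was already invoked in Proposition~\ref{prop:gap}, this gives the claim.

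No step is a real obstacle. The only point needing care is that the "max absolute eigenvalue equals spectral norm" identity relies on symmetry, not merely on diagonalizability, and symmetry holds here because $\Sigma$ is a second-moment matrix. The statement is therefore the standard fact that the spectral norm of a symmetric matrix equals its spectral radius, applied to $\Sigma$ shifted by a multiple of the identity.
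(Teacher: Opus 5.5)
Your proposal is correct, and your first route is essentially the paper's proof written in matrix form. The paper expands a unit vector in the orthonormal eigenbasis of $\Sigma$, notes that $\Sigma-\tfrac1d I_d$ has the same eigenvectors with eigenvalues $\lambda_k-\tfrac1d$, bounds $\lVert(\Sigma-\tfrac1d I_d)v\rVert^2=\sum_k c_k^2(\lambda_k-\tfrac1d)^2$ by the maximum, and attains the bound at the maximizing eigenvector; this is exactly your diagonal-matrix computation after the orthogonal change of basis, and your Rayleigh-quotient route is a valid but unnecessary second check.
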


\begin{proof}
Since $\Sigma$ is symmetric, the spectral theorem provides an orthonormal
eigenbasis $u_1,\dots,u_d$ with $\Sigma u_k=\lambda_k u_k$. Because every
vector is an eigenvector of the identity, the subtraction of $\tfrac1d I_d$
preserves this eigenbasis and merely shifts the spectrum,
\[
  \bigl(\Sigma-\tfrac1d I_d\bigr)u_k
  =\lambda_k u_k-\tfrac1d u_k
  =\bigl(\lambda_k-\tfrac1d\bigr)u_k,
\]
so $\Sigma-\tfrac1d I_d$ is diagonalized by the same basis with eigenvalues
$\lambda_k-\tfrac1d$. Expanding any unit vector as $v=\sum_k c_k u_k$ with
$\sum_k c_k^2=1$ then gives
\[
  \Bigl\lVert\bigl(\Sigma-\tfrac1d I_d\bigr)v\Bigr\rVert^2
  =\Bigl\lVert\sum_k c_k\bigl(\lambda_k-\tfrac1d\bigr)u_k\Bigr\rVert^2
  =\sum_k c_k^2\bigl(\lambda_k-\tfrac1d\bigr)^2
  \le\max_k\bigl(\lambda_k-\tfrac1d\bigr)^2,
\]
where the middle identity uses orthonormality and the inequality holds
because the sum is a convex combination of the squared shifted eigenvalues.
Choosing $v=u_{k^\ast}$ for an index $k^\ast$ attaining the maximum turns the
inequality into an equality. Therefore
\[
  \bigl\lVert\Sigma-\tfrac1d I_d\bigr\rVert_2
  =\max_{\lVert v\rVert=1}
   \Bigl\lVert\bigl(\Sigma-\tfrac1d I_d\bigr)v\Bigr\rVert
  =\max_k\bigl|\lambda_k-\tfrac1d\bigr|.
\]
\end{proof}

\begin{corollary}[KoLeo flattens the spectrum]\label{cor:flat}
Combining Fact~\ref{fact:koleo}, Propositions~\ref{prop:gap}
and~\ref{prop:flat},
\[
  \max_k\bigl|\lambda_k-\tfrac1d\bigr|\;\le\;\sqrt{\mathrm{KL}(P\,\Vert\,U)/2}.
\]
Hence, as minimizing $\mathcal L_{\mathrm{KoLeo}}$ drives
$\mathrm{KL}(P\,\Vert\,U)$ toward zero, every eigenvalue of $\Sigma$ is confined
to a shrinking interval around $1/d$ and the spectrum flattens.
\end{corollary}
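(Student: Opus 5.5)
The corollary follows by chaining the two propositions just proved, so the plan is a direct composition rather than a new argument. First, I would invoke Proposition~\ref{prop:flat} to rewrite the left-hand side: since $\Sigma$ is symmetric with eigenvalues $\lambda_1,\dots,\lambda_d$, it gives $\max_k|\lambda_k-\tfrac1d| = \lVert\Sigma-\tfrac1d I_d\rVert_2$. Next, I would apply Proposition~\ref{prop:gap} to bound this operator norm by $\sqrt{\mathrm{KL}(P\,\Vert\,U)/2}$. Substituting the first equality into the second inequality yields the displayed bound. This step needs no further hypotheses beyond the setup: $z\in\mathbb S^{d-1}$, $\Sigma=\mathbb E_P[zz^\top]$, and $U$ uniform on the sphere.

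For the interpretive claim, I would use Fact~\ref{fact:koleo}. Minimizing $\mathcal L_{\mathrm{KoLeo}}$ maximizes the (estimated) entropy of $P$, which is equivalent to decreasing $\mathrm{KL}(P\,\Vert\,U)$, since for densities on the sphere $\mathrm{KL}(P\,\Vert\,U)=\log|\mathbb S^{d-1}|-H(P)$. Hence along any sequence of representation distributions $P_t$ with $\mathrm{KL}(P_t\,\Vert\,U)\to0$, the bound gives $\max_k|\lambda_k^{(t)}-\tfrac1d|\to0$. Every eigenvalue is therefore squeezed into the shrinking interval $[\tfrac1d-\varepsilon_t,\tfrac1d+\varepsilon_t]$ with $\varepsilon_t=\sqrt{\mathrm{KL}(P_t\,\Vert\,U)/2}$, which is the sense in which the spectrum flattens. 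If needed, one can add that the trace constraint $\sum_k\lambda_k=1$ keeps $\lambda$ on the simplex, so the flat limit is exactly the uniform vector $(\tfrac1d,\dots,\tfrac1d)$. That vector maximizes the entropy-based effective rank, and continuity of entropy on the simplex then gives $\operatorname{erank}\to d$.

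The only delicate point is the link in Fact~\ref{fact:koleo} between the KoLeo loss and the true KL divergence. The loss is a finite-sample estimator, so minimizing it drives the \emph{estimated} entropy up, not necessarily the population $\mathrm{KL}$ down. I would not try to prove consistency here. Instead, I would state the corollary conditionally: the inequality is exact for whatever value $\mathrm{KL}(P\,\Vert\,U)$ takes, and the flattening conclusion holds to the extent that KoLeo minimization actually reduces that divergence, as asserted by Fact~\ref{fact:koleo}. The inequality itself is rigorous and needs nothing beyond the two propositions.
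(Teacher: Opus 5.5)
Your proposal is correct and matches the paper's argument. The bound comes from substituting the equality of Proposition~\ref{prop:flat} into the inequality of Proposition~\ref{prop:gap}, and Fact~\ref{fact:koleo} only supplies the interpretive link to KoLeo, whose \emph{estimated}-entropy hedge is the same one you make by stating the flattening claim conditionally.
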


The batch second moment $\tfrac1n Z^\top Z$ concentrates around $\Sigma$, and
its eigenvalues are the squared singular values of $Z/\sqrt{N}$, so the singular values
$\sigma_k$ flatten toward a common value as well. Since the effective rank
\citep{effective-rank,rankme} is maximized precisely when the
singular values are equal, KoLeo exerts an explicit pressure toward maximal
effective rank, consistent with the empirical increases observed when it is
added \citep{dimensional-collapse-in-vqvae,jetparticle-jepa}.

%% file: Sections/AA_Evaluation_Results.tex

\section{Detailed Evaluation Results}

\subsection{Reported Baseline Scores}

For the baselines in Table~\ref{tab:04_Main_Table}, we take scores from the literature where available and evaluate the rest with the protocols in Appendix~\ref{app:eval-protocols}. GLUE scores are taken from the original papers of RoBERTa~\citep{roberta}, MosaicBERT~\citep{mosaicbert}, NomicBERT~\citep{nomic-embed}, GTE-en-MLM~\citep{mgte}, ModernBERT~\citep{modernbert}, Ettin Encoder~\citep{ettin}, and NeoBERT~\citep{neobert}, and the GLUE scores of BERT are taken from \citet{mosaicbert}. SQuAD 2.0 scores of BERT and RoBERTa-large are taken from \citet{roberta}. All other scores, including GLUE of LFM2.5-Encoder, SQuAD 2.0 of the remaining models, and the NER, RewardBench, and retrieval benchmarks (BEIR, RAR-b, MLDR, and LongEmbed) of every baseline, are obtained by us. Table~\ref{tab:app-glue-reproduction} verifies that our GLUE protocol reproduces the reported scores of Ettin Encoder within 0.2 points at both sizes.

\input{Tables/AA_GLUE_Reproduction}

\subsection{Full GLUE Results}

Table~\ref{tab:app-glue} reports the per-task GLUE scores of Table~\ref{tab:04_Main_Table}. UltraBERT is the best on most tasks at both sizes, with the exceptions of QQP and CoLA at the base size and QQP, SST-2, MRPC, and CoLA at the medium size, where it stays within 1 point of the best model except for CoLA.

\subsection{Full RewardBench Results}

Table~\ref{tab:app-rewardbench} reports the 4 section scores of RewardBench. UltraBERT is the best on Chat, Chat Hard, and Reasoning at the base size, and on Chat Hard and Safety at the medium size, where Ettin Encoder is slightly ahead on Chat and Reasoning. The largest margin is on Chat Hard, where UltraBERT-380M leads the second best by 4.9 points.

\subsection{Full BEIR Results}

Table~\ref{tab:app-beir} reports the nDCG@10 on each of the 15 BEIR datasets, along with the selected learning rate for MS MARCO fine-tuning. UltraBERT is the best on 10 datasets at the base size and 8 at the medium size, where LFM2.5-Encoder-350M leads on NFCorpus, SciFact, TREC-COVID, and CQADupstack.

\subsection{Full LongEmbed Results}

Table~\ref{tab:app-longembed} reports the 6 LongEmbed tasks with the maximum context length of each model. UltraBERT is the best on all tasks at the base size, and at the medium size it leads on all tasks except Needle and 2WikiMultihopQA, where LFM2.5-Encoder-230M is ahead. The largest gains are on the two synthetic tasks, Passkey and Needle, where the 32K context is fully used.

Figure~\ref{fig:AA_Figure_LongEmbed_PassKey} breaks down the Passkey accuracy by document length. Every baseline degrades once the document exceeds its context length: the 512-context models fail beyond 1K tokens, and the 8K-context models, including ModernBERT and Ettin Encoder, drop sharply at 16K and 32K. LFM2.5-Encoder degrades more gracefully but still falls below 90\% at 32K. UltraBERT is the only model that retains 99\% accuracy at every length up to 32K, at both sizes.

\input{Figures/AA_Figure_LongEmbed_PassKey}

\clearpage

\input{Tables/AA_Full_GLUE}
\input{Tables/AA_Full_RewardBench}

\clearpage
\input{Tables/AA_Full_BEIR}
\input{Tables/AA_Full_LongEmbed}

%% file: Tables/AA_GLUE_Reproduction.tex
\begin{table}[h]
\centering
\setlength{\tabcolsep}{4pt}
\renewcommand{\arraystretch}{1.1}
\resizebox{\textwidth}{!}{%
\begin{tabular}{lc|cccccccc|c}
\toprule
\textbf{Model} & \textbf{\# Params} & \textbf{MNLI} & \textbf{QNLI} & \textbf{QQP} & \textbf{RTE} & \textbf{SST-2} & \textbf{MRPC} & \textbf{CoLA} & \textbf{STS-B} & \textbf{Avg.} \\
\midrule
Ettin Encoder            & 149M & 89.2 & 94.0 & 92.4 & 87.7 & 95.8 & 92.6 & 66.9 & 92.2 & 88.9 \\
\quad (Our evaluation)   &      & 88.9 & 93.8 & 92.1 & 88.4 & 95.4 & 92.2 & 67.0 & 92.5 & 88.8 \\
\midrule
Ettin Encoder            & 395M & 91.3 & 95.2 & 93.0 & 92.8 & 96.7 & 93.6 & 71.3 & 92.7 & 90.8 \\
\quad (Our evaluation)   &      & 90.9 & 95.3 & 93.0 & 91.7 & 96.7 & 93.1 & 71.1 & 92.9 & 90.6 \\
\bottomrule
\end{tabular}%
}
\caption{GLUE scores of Ettin Encoder as reported by \citet{ettin} and as re-evaluated with our fine-tuning protocol.}
\label{tab:app-glue-reproduction}
\end{table}

%% file: Figures/AA_Figure_LongEmbed_PassKey.tex
\begin{figure}[h]
\centering
\includegraphics[width=\textwidth]{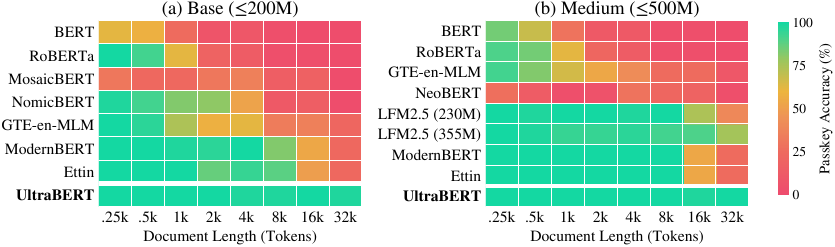}
\caption{Passkey retrieval accuracy on LongEmbed as a function of document length, for encoders in two size groups, (a) base ($\leq$200M parameters) and (b) medium ($\leq$500M parameters). Each cell is the recall@1 over the 50 queries at that length, from 256 to 32,768 tokens.}
\label{fig:AA_Figure_LongEmbed_PassKey}
\end{figure}

%% file: Tables/AA_Full_GLUE.tex
\begin{table}[t]
\centering
\small
\setlength{\tabcolsep}{4pt}
\renewcommand{\arraystretch}{1.1}
\resizebox{\textwidth}{!}{%
\begin{tabular}{lc|cccccccc|c}
\toprule
\textbf{Model} & \textbf{\# Params} & \textbf{MNLI} & \textbf{QNLI} & \textbf{QQP} & \textbf{RTE} & \textbf{SST-2} & \textbf{MRPC} & \textbf{CoLA} & \textbf{STS-B} & \textbf{Avg.} \\
\midrule
\multicolumn{11}{c}{\textit{\textbf{Base ($\leq$200M)}}} \\
\midrule
BERT          & 110M & 85.4 & 91.6 & 91.4 & 78.2 & 93.1 & 89.5 & 59.0 & 89.4 & 84.7 \\
RoBERTa       & 125M & 87.6 & 92.8 & 91.9 & 78.7 & 94.8 & 90.2 & 63.6 & 91.2 & 86.4 \\
MosaicBERT    & 137M & 85.6 & 91.4 & 92.0 & 83.0 & 93.5 & 89.0 & 58.2 & 90.3 & 85.4 \\
NomicBERT     & 137M & 86.0 & 92.0 & 92.0 & 82.0 & 93.0 & 88.0 & 50.0 & 90.0 & 84.0 \\
GTE-en-MLM    & 137M & 86.7 & 91.9 & 88.8 & 84.8 & 93.4 & 92.1 & 57.0 & 90.2 & 85.6 \\
ModernBERT    & 149M & 89.1 & 93.9 & 92.1 & 87.4 & \underline{96.0} & 92.2 & 65.1 & 91.8 & 88.4 \\
Ettin Encoder & 149M & \underline{89.2} & \underline{94.0} & \textbf{92.4} & \underline{87.7} & 95.8 & \underline{92.6} & \textbf{66.9} & \underline{92.2} & \underline{88.9} \\
\rowcolor{gray!15}
\textbf{UltraBERT} & 152M & \textbf{90.0} & \textbf{94.5} & \underline{92.2} & \textbf{91.0} & \textbf{96.3} & \textbf{93.6} & \underline{65.2} & \textbf{93.1} & \textbf{89.5} \\
\midrule
\multicolumn{11}{c}{\textit{\textbf{Medium ($\leq$500M)}}} \\
\midrule
BERT           & 334M & 86.3 & 92.8 & 90.9 & 83.8 & 93.3 & 87.8 & 56.2 & 90.6 & 85.2 \\
RoBERTa        & 355M & 90.2 & 94.7 & 92.2 & 86.6 & 96.4 & 90.9 & 68.0 & 92.4 & 88.9 \\
GTE-en-MLM     & 434M & 89.2 & 93.9 & 89.2 & 88.1 & 95.1 & \underline{93.5} & 60.4 & 91.4 & 87.6 \\
NeoBERT        & 222M & 89.0 & 93.7 & 90.7 & 91.3 & 95.6 & 93.4 & 66.2 & 91.8 & 89.0 \\
LFM2.5-Encoder & 230M & 88.4 & 93.7 & 92.4 & 86.6 & 94.6 & 91.9 & 65.5 & 92.4 & 88.2 \\
LFM2.5-Encoder & 355M & 89.5 & 94.4 & 92.6 & 89.2 & 96.3 & 91.8 & 66.8 & \underline{92.8} & 89.2 \\
ModernBERT     & 395M & \underline{90.8} & \underline{95.2} & \underline{92.7} & 92.1 & \textbf{97.1} & 91.7 & \textbf{71.4} & \underline{92.8} & 90.4 \\
Ettin Encoder  & 395M & \textbf{91.3} & \underline{95.2} & \textbf{93.0} & \underline{92.8} & \underline{96.7} & \textbf{93.6} & \underline{71.3} & 92.7 & \underline{90.8} \\
\rowcolor{gray!15}
\textbf{UltraBERT} & 380M & \textbf{91.3} & \textbf{95.4} & 92.6 & \textbf{93.9} & 96.4 & 93.4 & 71.2 & \textbf{93.4} & \textbf{91.0} \\
\bottomrule
\end{tabular}%
}
\caption{Detailed GLUE results on the development set. Best scores in each group are in bold and second best are underlined.}
\label{tab:app-glue}
\end{table}

%% file: Tables/AA_Full_RewardBench.tex
\begin{table}[t]
\centering
\small
\setlength{\tabcolsep}{6pt}
\renewcommand{\arraystretch}{1.1}
\begin{tabular}{lc|cccc|c}
\toprule
\textbf{Model} & \textbf{\# Params} & \textbf{Chat} & \textbf{Chat Hard} & \textbf{Safety} & \textbf{Reasoning} & \textbf{Avg.} \\
\midrule
\multicolumn{7}{c}{\textit{\textbf{Base ($\leq$200M)}}} \\
\midrule
BERT          & 110M & 74.0 & 52.9 & 58.6 & 51.9 & 59.4 \\
RoBERTa       & 125M & 77.7 & 55.7 & 58.6 & 60.8 & 63.2 \\
MosaicBERT    & 137M & 75.7 & 51.1 & 58.2 & 65.0 & 62.5 \\
NomicBERT     & 137M & 74.9 & 53.9 & \underline{66.4} & 61.4 & 64.1 \\
GTE-en-MLM    & 137M & 77.7 & 55.7 & \textbf{66.5} & 53.7 & 63.4 \\
ModernBERT    & 149M & 77.7 & 59.2 & 60.7 & 74.4 & 68.0 \\
Ettin Encoder & 149M & \underline{80.7} & \underline{61.2} & 63.1 & \underline{76.2} & \underline{70.3} \\
\rowcolor{gray!15}
\textbf{UltraBERT} & 152M & \textbf{85.8} & \textbf{63.4} & \underline{66.4} & \textbf{76.5} & \textbf{73.0} \\
\midrule
\multicolumn{7}{c}{\textit{\textbf{Medium ($\leq$500M)}}} \\
\midrule
BERT           & 334M & 72.1 & 52.6 & 60.9 & 64.9 & 62.6 \\
RoBERTa        & 355M & 80.4 & 62.3 & 65.3 & 66.0 & 68.5 \\
GTE-en-MLM     & 434M & 78.2 & 49.8 & 61.4 & 59.6 & 62.2 \\
NeoBERT        & 222M & 61.5 & 53.7 & 54.5 & 60.9 & 57.6 \\
LFM2.5-Encoder & 230M & 82.1 & 59.2 & 61.9 & 66.7 & 67.5 \\
LFM2.5-Encoder & 355M & 82.4 & 57.9 & 63.5 & 70.4 & 68.5 \\
ModernBERT     & 395M & 84.6 & 64.3 & 65.1 & 77.0 & 72.8 \\
Ettin Encoder  & 395M & \textbf{88.3} & \underline{66.4} & \underline{70.7} & \textbf{81.6} & \underline{76.7} \\
\rowcolor{gray!15}
\textbf{UltraBERT} & 380M & \underline{88.0} & \textbf{71.3} & \textbf{74.9} & \underline{81.2} & \textbf{78.8} \\
\bottomrule
\end{tabular}
\caption{Detailed RewardBench results. Best scores in each group are in bold and second best are underlined.}
\label{tab:app-rewardbench}
\end{table}

%% file: Tables/AA_Full_BEIR.tex
\begin{table}[t]
\centering
\setlength{\tabcolsep}{3pt}
\renewcommand{\arraystretch}{1.1}
\resizebox{\textwidth}{!}{%
\begin{tabular}{lc|cccccccccccccccc|c}
\toprule
\textbf{Model} & \textbf{\# Params} & \textbf{LR} & \makecell{\textbf{NF}\\\textbf{Corpus}} & \textbf{SciFact} & \makecell{\textbf{TREC-}\\\textbf{COVID}} & \textbf{FiQA} & \textbf{ArguAna} & \makecell{\textbf{Climate}\\\textbf{FEVER}} & \textbf{DBPedia} & \textbf{FEVER} & \makecell{\textbf{Hotpot}\\\textbf{QA}} & \makecell{\textbf{MS}\\\textbf{MARCO}} & \textbf{NQ} & \textbf{Quora} & \textbf{SciDocs} & \makecell{\textbf{Touche}\\\textbf{2020}} & \makecell{\textbf{CQA}\\\textbf{Dupstack}} & \textbf{Avg.} \\
\midrule
\multicolumn{19}{c}{\textit{\textbf{Base ($\leq$200M)}}} \\
\midrule
BERT          & 110M & 3e-4 & 22.7 & 49.3 & 54.7 & 21.7 & 40.9 & 18.9 & 24.0 & 62.6 & 42.0 & 34.1 & 37.0 & 82.9 & 9.7 & 22.9 & 27.1 & 36.7 \\
RoBERTa       & 125M & 3e-4 & 24.3 & 53.1 & 59.1 & 26.8 & 46.5 & 20.6 & 26.1 & 64.3 & 44.5 & 37.9 & 41.5 & 85.3 & 12.2 & 22.4 & 29.9 & 39.6 \\
MosaicBERT    & 137M & 3e-4 & 27.5 & 53.7 & 58.9 & 28.9 & 45.0 & 21.5 & 29.6 & 67.1 & 48.9 & \textbf{40.4} & 44.1 & 86.0 & 13.6 & \textbf{23.6} & 32.3 & 41.4 \\
NomicBERT     & 137M & 3e-4 & 26.7 & 55.5 & 67.8 & 26.1 & 44.6 & 22.3 & \textbf{31.7} & 67.1 & 50.4 & 39.8 & \underline{46.2} & 85.1 & 13.6 & 21.9 & 30.6 & 42.0 \\
GTE-en-MLM    & 137M & 1e-4 & \textbf{29.3} & 56.2 & 56.2 & \underline{31.1} & \underline{50.0} & \textbf{25.5} & 30.5 & \underline{68.4} & \underline{51.0} & 38.7 & 43.1 & 86.1 & \underline{15.0} & 20.5 & \underline{34.0} & 42.4 \\
ModernBERT    & 149M & 5e-4 & 26.5 & 58.3 & 73.7 & 29.8 & 46.3 & 23.1 & 28.1 & 67.0 & 48.2 & 38.6 & 44.3 & 86.6 & 13.3 & 21.9 & 33.2 & \underline{42.6} \\
Ettin Encoder & 149M & 5e-4 & 25.4 & \underline{58.8} & \underline{73.9} & 30.0 & 48.3 & 22.7 & 26.7 & 64.4 & 47.5 & 38.1 & 43.5 & \underline{86.7} & 13.3 & 23.1 & 33.8 & 42.4 \\
\rowcolor{gray!15}
\textbf{UltraBERT} & 152M & 5e-5 & \underline{28.4} & \textbf{64.3} & \textbf{75.8} & \textbf{33.5} & \textbf{50.2} & \underline{23.9} & \underline{30.6} & \textbf{69.6} & \textbf{52.5} & \underline{40.1} & \textbf{47.8} & \textbf{87.2} & \textbf{15.2} & \underline{23.5} & \textbf{36.0} & \textbf{45.2} \\
\midrule
\multicolumn{19}{c}{\textit{\textbf{Medium ($\leq$500M)}}} \\
\midrule
BERT           & 334M & 1e-4 & 24.6 & 52.5 & 55.2 & 25.1 & 46.3 & 21.4 & 23.1 & 65.7 & 44.5 & 32.8 & 39.0 & 84.3 & 13.0 & 20.7 & 30.5 & 38.6 \\
RoBERTa        & 355M & 1e-4 & 27.9 & 55.4 & 60.3 & 33.0 & 51.3 & 24.9 & 28.6 & \underline{70.0} & 49.8 & 38.6 & 46.6 & 86.9 & 14.4 & 20.1 & 34.2 & 42.8 \\
GTE-en-MLM     & 434M & 1e-4 & 30.3 & 58.5 & 53.4 & 33.2 & 48.9 & 19.6 & 25.1 & 49.8 & 42.0 & 35.4 & 42.3 & 83.7 & 15.4 & 19.3 & 34.8 & 39.4 \\
NeoBERT        & 222M & 3e-4 & 30.0 & 56.3 & 71.3 & 28.7 & 47.8 & 19.9 & 31.6 & 67.5 & 50.7 & 40.1 & 44.3 & 86.2 & 15.0 & 22.4 & 31.9 & 42.9 \\
LFM2.5-Encoder & 230M & 5e-4 & 30.6 & 66.1 & 78.8 & 33.0 & \underline{52.3} & \textbf{26.1} & 32.2 & 69.2 & \underline{55.0} & 40.8 & 50.0 & 87.3 & 16.2 & \textbf{25.9} & 37.4 & 46.7 \\
LFM2.5-Encoder & 355M & 3e-4 & \textbf{32.1} & \textbf{68.5} & \textbf{80.3} & \underline{34.9} & \textbf{53.9} & \underline{25.7} & \underline{32.8} & 69.8 & \textbf{56.5} & 40.3 & \underline{50.4} & \underline{87.6} & \underline{16.8} & 21.9 & \textbf{38.4} & \underline{47.3} \\
ModernBERT     & 395M & 3e-4 & 29.2 & 62.4 & 77.7 & 34.4 & 51.1 & 24.1 & 28.9 & 68.7 & 51.2 & 40.7 & 49.2 & 87.2 & 15.5 & 22.1 & 36.0 & 45.2 \\
Ettin Encoder  & 395M & 5e-4 & 29.6 & 65.2 & \underline{79.4} & 33.3 & 50.8 & 24.2 & 31.7 & 67.1 & 51.2 & \textbf{41.6} & 49.5 & 87.3 & 15.6 & \underline{25.1} & 36.5 & 45.9 \\
\rowcolor{gray!15}
\textbf{UltraBERT} & 380M & 3e-5 & \underline{31.7} & \underline{66.4} & 77.0 & \textbf{38.1} & \textbf{53.9} & \underline{25.7} & \textbf{33.9} & \textbf{73.2} & \textbf{56.5} & \underline{41.1} & \textbf{51.6} & \textbf{87.7} & \textbf{18.0} & 23.3 & \underline{38.0} & \textbf{47.7} \\
\bottomrule
\end{tabular}%
}
\caption{Detailed BEIR results (nDCG@10). LR denotes the selected learning rate for MS MARCO fine-tuning. Best scores in each group are in bold and second best are underlined.}
\label{tab:app-beir}
\end{table}

%% file: Tables/AA_Full_LongEmbed.tex
\begin{table}[t]
\centering
\setlength{\tabcolsep}{4pt}
\renewcommand{\arraystretch}{1.1}
\resizebox{\textwidth}{!}{%
\begin{tabular}{lrc|cccccc|c}
\toprule
\textbf{Model} & \textbf{Length} & \textbf{\# Params} & \textbf{Passkey} & \textbf{Needle} & \makecell{\textbf{Narrative}\\\textbf{QA}} & \textbf{QMSum} & \makecell{\textbf{Summ}\\\textbf{ScreenFD}} & \makecell{\textbf{2Wiki}\\\textbf{MultihopQA}} & \textbf{Avg.} \\
\midrule
\multicolumn{10}{c}{\textit{\textbf{Base ($\leq$200M)}}} \\
\midrule
BERT          & 512 & 110M & 21.5 & 19.2 & 18.8 & 18.1 & 54.2 & 43.8 & 29.3 \\
RoBERTa       & 512 & 125M & 37.3 & 21.0 & 20.5 & 19.9 & 58.6 & 45.6 & 33.8 \\
MosaicBERT    & 512 & 137M & 23.0 & 38.2 & 23.2 & 24.2 & 77.4 & 57.0 & 40.5 \\
NomicBERT     & 2048 & 137M & 54.2 & 30.8 & 22.4 & 25.4 & 77.5 & 59.5 & 45.0 \\
GTE-en-MLM    & 8192 & 137M & 60.2 & 50.2 & \underline{43.1} & 33.6 & \underline{91.8} & 71.7 & 58.5 \\
ModernBERT    & 8192 & 149M & \underline{81.7} & \underline{59.0} & 35.3 & \underline{38.5} & 89.6 & \underline{75.3} & \underline{63.2} \\
Ettin Encoder & 8192 & 149M & 80.2 & 50.8 & 38.3 & 33.7 & 90.1 & 67.6 & 60.1 \\
\rowcolor{gray!15}
\textbf{UltraBERT} & 32768 & 152M & \textbf{99.2} & \textbf{59.8} & \textbf{56.0} & \textbf{45.4} & \textbf{95.2} & \textbf{76.6} & \textbf{72.0} \\
\midrule
\multicolumn{10}{c}{\textit{\textbf{Medium ($\leq$500M)}}} \\
\midrule
BERT           & 512 & 334M & 27.0 & 17.8 & 20.2 & 17.4 & 55.3 & 42.6 & 30.0 \\
RoBERTa        & 512 & 355M & 35.2 & 18.8 & 20.7 & 18.9 & 57.2 & 42.6 & 32.2 \\
GTE-en-MLM     & 8192 & 434M & 49.8 & 42.8 & 43.6 & 37.2 & 92.2 & 76.3 & 57.0 \\
NeoBERT        & 4096 & 222M & 16.0 & 14.3 & 19.5 & 23.2 & 70.7 & 50.5 & 32.4 \\
LFM2.5-Encoder & 8192 & 230M & 88.8 & \textbf{78.2} & 47.3 & 37.5 & 89.2 & \textbf{79.0} & 70.0 \\
LFM2.5-Encoder & 8192 & 355M & \underline{92.5} & \underline{72.8} & \underline{52.4} & \underline{41.2} & \underline{93.3} & 78.0 & \underline{71.7} \\
ModernBERT     & 8192 & 395M & 85.0 & 51.5 & 40.2 & 38.0 & 91.8 & 74.7 & 63.5 \\
Ettin Encoder  & 8192 & 395M & 85.0 & 56.5 & 42.3 & 37.7 & 91.3 & 72.8 & 64.3 \\
\rowcolor{gray!15}
\textbf{UltraBERT} & 32768 & 380M & \textbf{99.8} & 68.2 & \textbf{58.2} & \textbf{48.0} & \textbf{96.6} & \underline{78.7} & \textbf{74.9} \\
\bottomrule
\end{tabular}%
}
\caption{Detailed LongEmbed results. Length denotes the maximum context length of each model. Passkey and Needle are scored by recall@1 averaged over context lengths, and the remaining tasks by nDCG@10. Best scores in each group are in bold and second best are underlined.}
\label{tab:app-longembed}
\end{table}

%% file: References/Datasets.bib
@inproceedings{glue,
    title = "{GLUE}: A Multi-Task Benchmark and Analysis Platform for Natural Language Understanding",
    author = "Wang, Alex  and
      Singh, Amanpreet  and
      Michael, Julian  and
      Hill, Felix  and
      Levy, Omer  and
      Bowman, Samuel R.",
    editor = "Linzen, Tal  and
      Chrupa{\l}a, Grzegorz  and
      Alishahi, Afra",
    booktitle = "Proceedings of the 2018 {EMNLP} Workshop {B}lackbox{NLP}: Analyzing and Interpreting Neural Networks for {NLP}",
    month = nov,
    year = "2018",
    address = "Brussels, Belgium",
    publisher = "Association for Computational Linguistics",
    url = "https://aclanthology.org/W18-5446/",
    doi = "10.18653/v1/W18-5446",
    pages = "353--355"
}

@inproceedings{squad2,
    title = "Know What You Don{'}t Know: Unanswerable Questions for {SQ}u{AD}",
    author = "Rajpurkar, Pranav  and
      Jia, Robin  and
      Liang, Percy",
    editor = "Gurevych, Iryna  and
      Miyao, Yusuke",
    booktitle = "Proceedings of the 56th Annual Meeting of the Association for Computational Linguistics (Volume 2: Short Papers)",
    month = jul,
    year = "2018",
    address = "Melbourne, Australia",
    publisher = "Association for Computational Linguistics",
    url = "https://aclanthology.org/P18-2124/",
    doi = "10.18653/v1/P18-2124",
    pages = "784--789"
}

@inproceedings{beir,
 author = {Thakur, Nandan and Reimers, Nils and R\"{u}ckl\'{e}, Andreas and Srivastava, Abhishek and Gurevych, Iryna},
 booktitle = {Proceedings of the Neural Information Processing Systems Track on Datasets and Benchmarks},
 editor = {J. Vanschoren and S. Yeung},
 pages = {},
 title = {BEIR: A Heterogeneous Benchmark for Zero-shot Evaluation of Information Retrieval Models},
 url = {https://datasets-benchmarks-proceedings.neurips.cc/paper_files/paper/2021/file/65b9eea6e1cc6bb9f0cd2a47751a186f-Paper-round2.pdf},
 volume = {1},
 year = {2021}
}

@inproceedings{
smollm2,
title={Smol{LM}2: When Smol Goes Big {\textemdash} Data-Centric Training of a Fully Open Small Language Model},
author={Loubna {Ben Allal} and Anton Lozhkov and Elie Bakouch and Gabriel Martin Blazquez and Guilherme Penedo and Lewis Tunstall and Andr{\'e}s Marafioti and Agust{\'\i}n Piqueres Lajar{\'\i}n and Hynek Kydl{\'\i}{\v{c}}ek and Vaibhav Srivastav and Joshua Lochner and Caleb Fahlgren and Xuan Son NGUYEN and Ben Burtenshaw and Cl{\'e}mentine Fourrier and Haojun Zhao and Hugo Larcher and Mathieu Morlon and Cyril Zakka and Colin Raffel and Leandro Von Werra and Thomas Wolf},
booktitle={Second Conference on Language Modeling},
year={2025},
url={https://openreview.net/forum?id=3JiCl2A14H}
}

@article{
emergent-abilities,
title={Emergent Abilities of Large Language Models},
author={Jason Wei and Yi Tay and Rishi Bommasani and Colin Raffel and Barret Zoph and Sebastian Borgeaud and Dani Yogatama and Maarten Bosma and Denny Zhou and Donald Metzler and Ed H. Chi and Tatsunori Hashimoto and Oriol Vinyals and Percy Liang and Jeff Dean and William Fedus},
journal={Transactions on Machine Learning Research},
issn={2835-8856},
year={2022},
url={https://openreview.net/forum?id=yzkSU5zdwD},
note={Survey Certification}
}

@article{arc,
  title={Think you have solved question answering? try arc, the ai2 reasoning challenge},
  author={Clark, Peter and Cowhey, Isaac and Etzioni, Oren and Khot, Tushar and Sabharwal, Ashish and Schoenick, Carissa and Tafjord, Oyvind},
  journal={arXiv preprint arXiv:1803.05457},
  year={2018}
}

@inproceedings{boolq,
    title = "{B}ool{Q}: Exploring the Surprising Difficulty of Natural Yes/No Questions",
    author = "Clark, Christopher  and
      Lee, Kenton  and
      Chang, Ming-Wei  and
      Kwiatkowski, Tom  and
      Collins, Michael  and
      Toutanova, Kristina",
    editor = "Burstein, Jill  and
      Doran, Christy  and
      Solorio, Thamar",
    booktitle = "Proceedings of the 2019 Conference of the North {A}merican Chapter of the Association for Computational Linguistics: Human Language Technologies, Volume 1 (Long and Short Papers)",
    month = jun,
    year = "2019",
    address = "Minneapolis, Minnesota",
    publisher = "Association for Computational Linguistics",
    url = "https://aclanthology.org/N19-1300/",
    doi = "10.18653/v1/N19-1300",
    pages = "2924--2936"
}

@inproceedings{commonsenseqa,
    title = "{C}ommonsense{QA}: A Question Answering Challenge Targeting Commonsense Knowledge",
    author = "Talmor, Alon  and
      Herzig, Jonathan  and
      Lourie, Nicholas  and
      Berant, Jonathan",
    editor = "Burstein, Jill  and
      Doran, Christy  and
      Solorio, Thamar",
    booktitle = "Proceedings of the 2019 Conference of the North {A}merican Chapter of the Association for Computational Linguistics: Human Language Technologies, Volume 1 (Long and Short Papers)",
    month = jun,
    year = "2019",
    address = "Minneapolis, Minnesota",
    publisher = "Association for Computational Linguistics",
    url = "https://aclanthology.org/N19-1421/",
    doi = "10.18653/v1/N19-1421",
    pages = "4149--4158"
}

@inproceedings{
mmlu,
title={Measuring Massive Multitask Language Understanding},
author={Dan Hendrycks and Collin Burns and Steven Basart and Andy Zou and Mantas Mazeika and Dawn Song and Jacob Steinhardt},
booktitle={International Conference on Learning Representations},
year={2021},
url={https://openreview.net/forum?id=d7KBjmI3GmQ}
}

@inproceedings{openbookqa,
    title = "Can a Suit of Armor Conduct Electricity? A New Dataset for Open Book Question Answering",
    author = "Mihaylov, Todor  and
      Clark, Peter  and
      Khot, Tushar  and
      Sabharwal, Ashish",
    editor = "Riloff, Ellen  and
      Chiang, David  and
      Hockenmaier, Julia  and
      Tsujii, Jun{'}ichi",
    booktitle = "Proceedings of the 2018 Conference on Empirical Methods in Natural Language Processing",
    month = oct # "-" # nov,
    year = "2018",
    address = "Brussels, Belgium",
    publisher = "Association for Computational Linguistics",
    url = "https://aclanthology.org/D18-1260/",
    doi = "10.18653/v1/D18-1260",
    pages = "2381--2391"
}

@inproceedings{siqa,
    title = "Social {IQ}a: Commonsense Reasoning about Social Interactions",
    author = "Sap, Maarten  and
      Rashkin, Hannah  and
      Chen, Derek  and
      Le Bras, Ronan  and
      Choi, Yejin",
    editor = "Inui, Kentaro  and
      Jiang, Jing  and
      Ng, Vincent  and
      Wan, Xiaojun",
    booktitle = "Proceedings of the 2019 Conference on Empirical Methods in Natural Language Processing and the 9th International Joint Conference on Natural Language Processing (EMNLP-IJCNLP)",
    month = nov,
    year = "2019",
    address = "Hong Kong, China",
    publisher = "Association for Computational Linguistics",
    url = "https://aclanthology.org/D19-1454/",
    doi = "10.18653/v1/D19-1454",
    pages = "4463--4473"
}

@InProceedings{flan,
  title = 	 {The Flan Collection: Designing Data and Methods for Effective Instruction Tuning},
  author =       {Longpre, Shayne and Hou, Le and Vu, Tu and Webson, Albert and Chung, Hyung Won and Tay, Yi and Zhou, Denny and Le, Quoc V and Zoph, Barret and Wei, Jason and Roberts, Adam},
  booktitle = 	 {Proceedings of the 40th International Conference on Machine Learning},
  pages = 	 {22631--22648},
  year = 	 {2023},
  editor = 	 {Krause, Andreas and Brunskill, Emma and Cho, Kyunghyun and Engelhardt, Barbara and Sabato, Sivan and Scarlett, Jonathan},
  volume = 	 {202},
  series = 	 {Proceedings of Machine Learning Research},
  month = 	 {23--29 Jul},
  publisher =    {PMLR},
  url = 	 {https://proceedings.mlr.press/v202/longpre23a.html}
}

@inproceedings{
wikitext103,
title={Pointer Sentinel Mixture Models},
author={Stephen Merity and Caiming Xiong and James Bradbury and Richard Socher},
booktitle={International Conference on Learning Representations},
year={2017},
url={https://openreview.net/forum?id=Byj72udxe}
}

@article{msmarco,
  author    = {Tri Nguyen and
               Mir Rosenberg and
               Xia Song and
               Jianfeng Gao and
               Saurabh Tiwary and
               Rangan Majumder and
               Li Deng},
  title     = {{MS} {MARCO:} {A} Human Generated MAchine Reading COmprehension Dataset},
  journal   = {CoRR},
  volume    = {abs/1611.09268},
  year      = {2016},
  url       = {http://arxiv.org/abs/1611.09268},
  archivePrefix = {arXiv},
  eprint    = {1611.09268},
  bibsource = {dblp computer science bibliography, https://dblp.org}
}

@inproceedings{multiconer2,
    title = "{M}ulti{C}o{NER} v2: a Large Multilingual dataset for Fine-grained and Noisy Named Entity Recognition",
    author = "Fetahu, Besnik  and
      Chen, Zhiyu  and
      Kar, Sudipta  and
      Rokhlenko, Oleg  and
      Malmasi, Shervin",
    editor = "Bouamor, Houda  and
      Pino, Juan  and
      Bali, Kalika",
    booktitle = "Findings of the Association for Computational Linguistics: EMNLP 2023",
    month = dec,
    year = "2023",
    address = "Singapore",
    publisher = "Association for Computational Linguistics",
    url = "https://aclanthology.org/2023.findings-emnlp.134/",
    doi = "10.18653/v1/2023.findings-emnlp.134",
    pages = "2027--2051"
}

@inproceedings{fewnerd,
    title = "Few-{NERD}: A Few-shot Named Entity Recognition Dataset",
    author = "Ding, Ning  and
      Xu, Guangwei  and
      Chen, Yulin  and
      Wang, Xiaobin  and
      Han, Xu  and
      Xie, Pengjun  and
      Zheng, Haitao  and
      Liu, Zhiyuan",
    editor = "Zong, Chengqing  and
      Xia, Fei  and
      Li, Wenjie  and
      Navigli, Roberto",
    booktitle = "Proceedings of the 59th Annual Meeting of the Association for Computational Linguistics and the 11th International Joint Conference on Natural Language Processing (Volume 1: Long Papers)",
    month = aug,
    year = "2021",
    address = "Online",
    publisher = "Association for Computational Linguistics",
    url = "https://aclanthology.org/2021.acl-long.248/",
    doi = "10.18653/v1/2021.acl-long.248",
    pages = "3198--3213"
}

@inproceedings{rewardbench,
    title = "{R}eward{B}ench: Evaluating Reward Models for Language Modeling",
    author = "Lambert, Nathan  and
      Pyatkin, Valentina  and
      Morrison, Jacob  and
      Miranda, LJ  and
      Lin, Bill Yuchen  and
      Chandu, Khyathi  and
      Dziri, Nouha  and
      Kumar, Sachin  and
      Zick, Tom  and
      Choi, Yejin  and
      Smith, Noah A.  and
      Hajishirzi, Hannaneh",
    editor = "Chiruzzo, Luis  and
      Ritter, Alan  and
      Wang, Lu",
    booktitle = "Findings of the Association for Computational Linguistics: NAACL 2025",
    month = apr,
    year = "2025",
    address = "Albuquerque, New Mexico",
    publisher = "Association for Computational Linguistics",
    url = "https://aclanthology.org/2025.findings-naacl.96/",
    doi = "10.18653/v1/2025.findings-naacl.96",
    pages = "1755--1797",
    ISBN = "979-8-89176-195-7"
}

@article{rarb,
  title={Rar-b: Reasoning as retrieval benchmark},
  author={Xiao, Chenghao and Hudson, G Thomas and Moubayed, Noura Al},
  journal={arXiv preprint arXiv:2404.06347},
  year={2024}
}

@inproceedings{mldr,
    title = "{M}3-Embedding: Multi-Linguality, Multi-Functionality, Multi-Granularity Text Embeddings Through Self-Knowledge Distillation",
    author = "Chen, Jianlyu  and
      Xiao, Shitao  and
      Zhang, Peitian  and
      Luo, Kun  and
      Lian, Defu  and
      Liu, Zheng",
    editor = "Ku, Lun-Wei  and
      Martins, Andre  and
      Srikumar, Vivek",
    booktitle = "Findings of the Association for Computational Linguistics: ACL 2024",
    month = aug,
    year = "2024",
    address = "Bangkok, Thailand",
    publisher = "Association for Computational Linguistics",
    url = "https://aclanthology.org/2024.findings-acl.137/",
    doi = "10.18653/v1/2024.findings-acl.137",
    pages = "2318--2335"
}

@inproceedings{longembed,
    title = "{L}ong{E}mbed: Extending Embedding Models for Long Context Retrieval",
    author = "Zhu, Dawei  and
      Wang, Liang  and
      Yang, Nan  and
      Song, Yifan  and
      Wu, Wenhao  and
      Wei, Furu  and
      Li, Sujian",
    editor = "Al-Onaizan, Yaser  and
      Bansal, Mohit  and
      Chen, Yun-Nung",
    booktitle = "Proceedings of the 2024 Conference on Empirical Methods in Natural Language Processing",
    month = nov,
    year = "2024",
    address = "Miami, Florida, USA",
    publisher = "Association for Computational Linguistics",
    url = "https://aclanthology.org/2024.emnlp-main.47/",
    doi = "10.18653/v1/2024.emnlp-main.47",
    pages = "802--816"
}

@inproceedings{
humanevalpack,
title={OctoPack: Instruction Tuning Code Large Language Models},
author={Niklas Muennighoff and Qian Liu and Armel Randy Zebaze and Qinkai Zheng and Binyuan Hui and Terry Yue Zhuo and Swayam Singh and Xiangru Tang and Leandro Von Werra and Shayne Longpre},
booktitle={The Twelfth International Conference on Learning Representations},
year={2024},
url={https://openreview.net/forum?id=mw1PWNSWZP}
}

@article{mbpp,
  title={Program synthesis with large language models},
  author={Austin, Jacob and Odena, Augustus and Nye, Maxwell and Bosma, Maarten and Michalewski, Henryk and Dohan, David and Jiang, Ellen and Cai, Carrie and Terry, Michael and Le, Quoc and others},
  journal={arXiv preprint arXiv:2108.07732},
  year={2021}
}

@article{gsm8k,
  title={Training verifiers to solve math word problems},
  author={Cobbe, Karl and Kosaraju, Vineet and Bavarian, Mohammad and Chen, Mark and Jun, Heewoo and Kaiser, Lukasz and Plappert, Matthias and Tworek, Jerry and Hilton, Jacob and Nakano, Reiichiro and others},
  journal={arXiv preprint arXiv:2110.14168},
  year={2021}
}

@inproceedings{math,
 author = {Hendrycks, Dan and Burns, Collin and Kadavath, Saurav and Arora, Akul and Basart, Steven and Tang, Eric and Song, Dawn and Steinhardt, Jacob},
 booktitle = {Proceedings of the Neural Information Processing Systems Track on Datasets and Benchmarks},
 editor = {J. Vanschoren and S. Yeung},
 pages = {},
 title = {Measuring Mathematical Problem Solving With the MATH Dataset},
 url = {https://datasets-benchmarks-proceedings.neurips.cc/paper_files/paper/2021/file/be83ab3ecd0db773eb2dc1b0a17836a1-Paper-round2.pdf},
 volume = {1},
 year = {2021}
}

@inproceedings{
metamathqa,
title={MetaMath: Bootstrap Your Own Mathematical Questions for Large Language Models},
author={Longhui Yu and Weisen Jiang and Han Shi and Jincheng YU and Zhengying Liu and Yu Zhang and James Kwok and Zhenguo Li and Adrian Weller and Weiyang Liu},
booktitle={The Twelfth International Conference on Learning Representations},
year={2024},
url={https://openreview.net/forum?id=N8N0hgNDRt}
}

@article{codesearchnet,
  title={Codesearchnet challenge: Evaluating the state of semantic code search},
  author={Husain, Hamel and Wu, Ho-Hsiang and Gazit, Tiferet and Allamanis, Miltiadis and Brockschmidt, Marc},
  journal={arXiv preprint arXiv:1909.09436},
  year={2019}
}

@inproceedings{mteb,
    title = "{MTEB}: Massive Text Embedding Benchmark",
    author = "Muennighoff, Niklas  and
      Tazi, Nouamane  and
      Magne, Loic  and
      Reimers, Nils",
    editor = "Vlachos, Andreas  and
      Augenstein, Isabelle",
    booktitle = "Proceedings of the 17th Conference of the European Chapter of the Association for Computational Linguistics",
    month = may,
    year = "2023",
    address = "Dubrovnik, Croatia",
    publisher = "Association for Computational Linguistics",
    url = "https://aclanthology.org/2023.eacl-main.148/",
    doi = "10.18653/v1/2023.eacl-main.148",
    pages = "2014--2037"
}

@inproceedings{fineweb-edu,
 author = {Penedo, Guilherme and Kydl\'{\i}\v{c}ek, Hynek and allal, Loubna Ben and Lozhkov, Anton and Mitchell, Margaret and Raffel, Colin and Von Werra, Leandro and Wolf, Thomas},
 booktitle = {Advances in Neural Information Processing Systems},
 doi = {10.52202/079017-0970},
 editor = {A. Globerson and L. Mackey and D. Belgrave and A. Fan and U. Paquet and J. Tomczak and C. Zhang},
 pages = {30811--30849},
 publisher = {Curran Associates, Inc.},
 title = {The FineWeb Datasets: Decanting the Web for the Finest Text Data at Scale},
 url = {https://proceedings.neurips.cc/paper_files/paper/2024/file/370df50ccfdf8bde18f8f9c2d9151bda-Paper-Datasets_and_Benchmarks_Track.pdf},
 volume = {37},
 year = {2024}
}

@inproceedings{
megamath,
title={MegaMath: Pushing the Limits of Open Math Corpora},
author={Fan Zhou and Zengzhi Wang and Nikhil Ranjan and Zhoujun Cheng and Liping Tang and Guowei He and Zhengzhong Liu and Eric P. Xing},
booktitle={Second Conference on Language Modeling},
year={2025},
url={https://openreview.net/forum?id=SHB0sLrZrh}
}

@inproceedings{dclm,
 author = {Li, Jeffrey and Fang, Alex and Smyrnis, Georgios and Ivgi, Maor and Jordan, Matt and Gadre, Samir and Bansal, Hritik and Guha, Etash and Keh, Sedrick and Arora, Kushal and Garg, Saurabh and Xin, Rui and Muennighoff, Niklas and Heckel, Reinhard and Mercat, Jean and Chen, Mayee and Gururangan, Suchin and Wortsman, Mitchell and Albalak, Alon and Bitton, Yonatan and Nezhurina, Marianna and Abbas, Amro and Hsieh, Cheng-Yu and Ghosh, Dhruba and Gardner, Josh and Kilian, Maciej and Zhang, Hanlin and Shao, Rulin and Pratt, Sarah and Sanyal, Sunny and Ilharco, Gabriel and Daras, Giannis and Marathe, Kalyani and Gokaslan, Aaron and Zhang, Jieyu and Chandu, Khyathi and Nguyen, Thao and Vasiljevic, Igor and Kakade, Sham and Song, Shuran and Sanghavi, Sujay and Faghri, Fartash and Oh, Sewoong and Zettlemoyer, Luke and Lo, Kyle and El-Nouby, Alaaeldin and Pouransari, Hadi and Toshev, Alexander and Wang, Stephanie and Groeneveld, Dirk and Soldaini, Luca and Koh, Pang Wei and Jitsev, Jenia and Kollar, Thomas and Dimakis, Alexandros G. and Carmon, Yair and Dave, Achal and Schmidt, Ludwig and Shankar, Vaishaal},
 booktitle = {Advances in Neural Information Processing Systems},
 doi = {10.52202/079017-0455},
 editor = {A. Globerson and L. Mackey and D. Belgrave and A. Fan and U. Paquet and J. Tomczak and C. Zhang},
 pages = {14200--14282},
 publisher = {Curran Associates, Inc.},
 title = {DataComp-LM: In search of the next generation of training sets for language models},
 url = {https://proceedings.neurips.cc/paper_files/paper/2024/file/19e4ea30dded58259665db375885e412-Paper-Datasets_and_Benchmarks_Track.pdf},
 volume = {37},
 year = {2024}
}

@techreport{peS2o,
    author = {Luca Soldaini and Kyle Lo},
    year = 2023,
    title = {{peS2o (Pretraining Efficiently on S2ORC) Dataset}},
    institution = {{Allen Institute for AI}},
    note = {ODC-By, \url{https://github.com/allenai/pes2o}}
}

@inproceedings{natural-reasoning,
 author = {Yuan, Weizhe and Yu, Jane and Jiang, Song and Padthe, Karthik and Li, Yang and Wang, Dong and Kulikov, Ilia and Cho, Kyunghyun and Tian, Yuandong and Weston, Jason and Li, Xian},
 booktitle = {Advances in Neural Information Processing Systems},
 doi = {10.52202/085713-2246},
 editor = {D. Belgrave and C. Zhang and H. Lin and R. Pascanu and P. Koniusz and M. Ghassemi and N. Chen},
 pages = {},
 publisher = {Curran Associates, Inc.},
 title = {NaturalReasoning: Reasoning in the Wild with 2.8M Challenging Questions},
 url = {https://proceedings.neurips.cc/paper_files/paper/2025/file/60b22138917015fcba7165670e6f453c-Paper-Datasets_and_Benchmarks_Track.pdf},
 volume = {38, Main Conference},
 year = {2025}
}

@software{smollm-corpus,
  author = {Ben Allal, Loubna and Lozhkov, Anton and Penedo, Guilherme and Wolf, Thomas and von Werra, Leandro},
  title = {SmolLM-Corpus},
  month = July,
  year = 2024,
  url = {https://huggingface.co/datasets/HuggingFaceTB/smollm-corpus}
}

@inproceedings{dolma,
    title = "Dolma: an Open Corpus of Three Trillion Tokens for Language Model Pretraining Research",
    author = "Soldaini, Luca  and
      Kinney, Rodney  and
      Bhagia, Akshita  and
      Schwenk, Dustin  and
      Atkinson, David  and
      Authur, Russell  and
      Bogin, Ben  and
      Chandu, Khyathi  and
      Dumas, Jennifer  and
      Elazar, Yanai  and
      Hofmann, Valentin  and
      Jha, Ananya  and
      Kumar, Sachin  and
      Lucy, Li  and
      Lyu, Xinxi  and
      Lambert, Nathan  and
      Magnusson, Ian  and
      Morrison, Jacob  and
      Muennighoff, Niklas  and
      Naik, Aakanksha  and
      Nam, Crystal  and
      Peters, Matthew  and
      Ravichander, Abhilasha  and
      Richardson, Kyle  and
      Shen, Zejiang  and
      Strubell, Emma  and
      Subramani, Nishant  and
      Tafjord, Oyvind  and
      Walsh, Evan  and
      Zettlemoyer, Luke  and
      Smith, Noah  and
      Hajishirzi, Hannaneh  and
      Beltagy, Iz  and
      Groeneveld, Dirk  and
      Dodge, Jesse  and
      Lo, Kyle",
    editor = "Ku, Lun-Wei  and
      Martins, Andre  and
      Srikumar, Vivek",
    booktitle = "Proceedings of the 62nd Annual Meeting of the Association for Computational Linguistics (Volume 1: Long Papers)",
    month = aug,
    year = "2024",
    address = "Bangkok, Thailand",
    publisher = "Association for Computational Linguistics",
    url = "https://aclanthology.org/2024.acl-long.840/",
    doi = "10.18653/v1/2024.acl-long.840",
    pages = "15725--15788"
}

@inproceedings{redpajama,
 author = {Weber, Maurice and Fu, Daniel Y. and Anthony, Quentin and Oren, Yonatan and Adams, Shane and Alexandrov, Anton and Lyu, Xiaozhong and Nguyen, Huu and Yao, Xiaozhe and Adams, Virginia and Athiwaratkun, Ben and Chalamala, Rahul and Chen, Kezhen and Ryabinin, Max and Dao, Tri and Liang, Percy and R\'{e}, Christopher and Rish, Irina and Zhang, Ce},
 booktitle = {Advances in Neural Information Processing Systems},
 doi = {10.52202/079017-3697},
 editor = {A. Globerson and L. Mackey and D. Belgrave and A. Fan and U. Paquet and J. Tomczak and C. Zhang},
 pages = {116462--116492},
 publisher = {Curran Associates, Inc.},
 title = {RedPajama: an Open Dataset for Training Large Language Models},
 url = {https://proceedings.neurips.cc/paper_files/paper/2024/file/d34497330b1fd6530f7afd86d0df9f76-Paper-Datasets_and_Benchmarks_Track.pdf},
 volume = {37},
 year = {2024}
}


%% file: References/General.bib
@InProceedings{beyond-chinchilla-optimal,
  title = 	 {Beyond Chinchilla-Optimal: Accounting for Inference in Language Model Scaling Laws},
  author =       {Sardana, Nikhil and Portes, Jacob and Doubov, Sasha and Frankle, Jonathan},
  booktitle = 	 {Proceedings of the 41st International Conference on Machine Learning},
  pages = 	 {43445--43460},
  year = 	 {2024},
  editor = 	 {Salakhutdinov, Ruslan and Kolter, Zico and Heller, Katherine and Weller, Adrian and Oliver, Nuria and Scarlett, Jonathan and Berkenkamp, Felix},
  volume = 	 {235},
  series = 	 {Proceedings of Machine Learning Research},
  month = 	 {21--27 Jul},
  publisher =    {PMLR},
  url = 	 {https://proceedings.mlr.press/v235/sardana24a.html}
}

@inproceedings{lm-over-training,
title={Language models scale reliably with over-training and on downstream tasks},
author={Samir Yitzhak Gadre and Georgios Smyrnis and Vaishaal Shankar and Suchin Gururangan and Mitchell Wortsman and Rulin Shao and Jean Mercat and Alex Fang and Jeffrey Li and Sedrick Keh and Rui Xin and Marianna Nezhurina and Igor Vasiljevic and Luca Soldaini and Jenia Jitsev and Alex Dimakis and Gabriel Ilharco and Pang Wei Koh and Shuran Song and Thomas Kollar and Yair Carmon and Achal Dave and Reinhard Heckel and Niklas Muennighoff and Ludwig Schmidt},
booktitle={The Thirteenth International Conference on Learning Representations},
year={2025},
url={https://openreview.net/forum?id=iZeQBqJamf}
}

@inproceedings{modernbert-or-debertav3,
    title = "{M}odern{BERT} or {D}e{BERT}a{V}3? Examining Architecture and Data Influence on Transformer Encoder Models Performance",
    author = "Antoun, Wissam  and
      Sagot, Beno{\^i}t  and
      Seddah, Djam{\'e}",
    editor = "Inui, Kentaro  and
      Sakti, Sakriani  and
      Wang, Haofen  and
      Wong, Derek F.  and
      Bhattacharyya, Pushpak  and
      Banerjee, Biplab  and
      Ekbal, Asif  and
      Chakraborty, Tanmoy  and
      Singh, Dhirendra Pratap",
    booktitle = "Proceedings of the 14th International Joint Conference on Natural Language Processing and the 4th Conference of the Asia-Pacific Chapter of the Association for Computational Linguistics",
    month = dec,
    year = "2025",
    address = "Mumbai, India",
    publisher = "The Asian Federation of Natural Language Processing and The Association for Computational Linguistics",
    url = "https://aclanthology.org/2025.ijcnlp-long.164/",
    doi = "10.18653/v1/2025.ijcnlp-long.164",
    pages = "3061--3074",
    ISBN = "979-8-89176-298-5"
}

@inproceedings{electra-emb-bad,
    title = "Are {ELECTRA}{'}s Sentence Embeddings Beyond Repair? The Case of Semantic Textual Similarity",
    author = "Rep, Ivan  and
      Duki{\'c}, David  and
      {\v{S}}najder, Jan",
    editor = "Al-Onaizan, Yaser  and
      Bansal, Mohit  and
      Chen, Yun-Nung",
    booktitle = "Findings of the Association for Computational Linguistics: EMNLP 2024",
    month = nov,
    year = "2024",
    address = "Miami, Florida, USA",
    publisher = "Association for Computational Linguistics",
    url = "https://aclanthology.org/2024.findings-emnlp.535/",
    doi = "10.18653/v1/2024.findings-emnlp.535",
    pages = "9159--9169"
}

@inproceedings{electra-rtd-variants,
    title = "Effective Pretraining Objectives for Transformer-based Autoencoders",
    author = "Di Liello, Luca  and
      Gabburo, Matteo  and
      Moschitti, Alessandro",
    editor = "Goldberg, Yoav  and
      Kozareva, Zornitsa  and
      Zhang, Yue",
    booktitle = "Findings of the Association for Computational Linguistics: EMNLP 2022",
    month = dec,
    year = "2022",
    address = "Abu Dhabi, United Arab Emirates",
    publisher = "Association for Computational Linguistics",
    url = "https://aclanthology.org/2022.findings-emnlp.405/",
    doi = "10.18653/v1/2022.findings-emnlp.405",
    pages = "5533--5547"
}

@article{effective-rank,
  title={The effective rank: A measure of effective dimensionality},
  author={Olivier Roy and Martin Vetterli},
  journal={2007 15th European Signal Processing Conference},
  year={2007},
  pages={606-610},
  url={https://api.semanticscholar.org/CorpusID:12184201}
}

@inproceedings{matryoshka-repr-learning,
 author = {Kusupati, Aditya and Bhatt, Gantavya and Rege, Aniket and Wallingford, Matthew and Sinha, Aditya and Ramanujan, Vivek and Howard-Snyder, William and Chen, Kaifeng and Kakade, Sham and Jain, Prateek and Farhadi, Ali},
 booktitle = {Advances in Neural Information Processing Systems},
 doi = {10.52202/068431-2192},
 editor = {S. Koyejo and S. Mohamed and A. Agarwal and D. Belgrave and K. Cho and A. Oh},
 pages = {30233--30249},
 publisher = {Curran Associates, Inc.},
 title = {Matryoshka Representation Learning},
 url = {https://proceedings.neurips.cc/paper_files/paper/2022/file/c32319f4868da7613d78af9993100e42-Paper-Conference.pdf},
 volume = {35},
 year = {2022}
}

@article{retrieval-eff-rank,
  title={From Topology to Retrieval: Decoding Embedding Spaces with Unified Signatures},
  author={Rottach, Florian and Rudman, William and Rieck, Bastian and Scells, Harrisen and Eickhoff, Carsten},
  journal={arXiv preprint arXiv:2511.22150},
  year={2025}
}

@inproceedings{dimensional-collapse-in-vqvae,
 author = {Zhang, Jiayou and Shen, Yifan and Chen, Guangyi and Song, Le and Xing, Eric},
 booktitle = {Advances in Neural Information Processing Systems},
 doi = {10.52202/085713-5693},
 editor = {D. Belgrave and C. Zhang and H. Lin and R. Pascanu and P. Koniusz and M. Ghassemi and N. Chen},
 pages = {170934--170966},
 publisher = {Curran Associates, Inc.},
 title = {Dimensional Collapse in VQVAEs: Evidence and Remedies},
 url = {https://proceedings.neurips.cc/paper_files/paper/2025/file/f9a50cf037f5ca2f687e3cd70b572c6f-Paper-Conference.pdf},
 volume = {38, Main Conference},
 year = {2025}
}

@inproceedings{
koleo,
title={Spreading vectors for similarity search},
author={Alexandre Sablayrolles and Matthijs Douze and Cordelia Schmid and Hervé Jégou},
booktitle={International Conference on Learning Representations},
year={2019},
url={https://openreview.net/forum?id=SkGuG2R5tm},
}

@inproceedings{prolong,
    title = "How to Train Long-Context Language Models (Effectively)",
    author = "Gao, Tianyu  and
      Wettig, Alexander  and
      Yen, Howard  and
      Chen, Danqi",
    editor = "Che, Wanxiang  and
      Nabende, Joyce  and
      Shutova, Ekaterina  and
      Pilehvar, Mohammad Taher",
    booktitle = "Proceedings of the 63rd Annual Meeting of the Association for Computational Linguistics (Volume 1: Long Papers)",
    month = jul,
    year = "2025",
    address = "Vienna, Austria",
    publisher = "Association for Computational Linguistics",
    url = "https://aclanthology.org/2025.acl-long.366/",
    doi = "10.18653/v1/2025.acl-long.366",
    pages = "7376--7399",
    ISBN = "979-8-89176-251-0"
}

@inproceedings{mtl2,
 author = {Yu, Tianhe and Kumar, Saurabh and Gupta, Abhishek and Levine, Sergey and Hausman, Karol and Finn, Chelsea},
 booktitle = {Advances in Neural Information Processing Systems},
 editor = {H. Larochelle and M. Ranzato and R. Hadsell and M.F. Balcan and H. Lin},
 pages = {5824--5836},
 publisher = {Curran Associates, Inc.},
 title = {Gradient Surgery for Multi-Task Learning},
 url = {https://proceedings.neurips.cc/paper_files/paper/2020/file/3fe78a8acf5fda99de95303940a2420c-Paper.pdf},
 volume = {33},
 year = {2020}
}

@inproceedings{mtl1,
 author = {Sener, Ozan and Koltun, Vladlen},
 booktitle = {Advances in Neural Information Processing Systems},
 editor = {S. Bengio and H. Wallach and H. Larochelle and K. Grauman and N. Cesa-Bianchi and R. Garnett},
 pages = {},
 publisher = {Curran Associates, Inc.},
 title = {Multi-Task Learning as Multi-Objective Optimization},
 url = {https://proceedings.neurips.cc/paper_files/paper/2018/file/432aca3a1e345e339f35a30c8f65edce-Paper.pdf},
 volume = {31},
 year = {2018}
}

@article{skywork-reward,
  title={Skywork-reward: Bag of tricks for reward modeling in llms},
  author={Liu, Chris Yuhao and Zeng, Liang and Liu, Jiacai and Yan, Rui and He, Jujie and Wang, Chaojie and Yan, Shuicheng and Liu, Yang and Zhou, Yahui},
  journal={arXiv preprint arXiv:2410.18451},
  year={2024}
}

@InProceedings{ultrafeedback,
  title = 	 {{ULTRAFEEDBACK}: Boosting Language Models with Scaled {AI} Feedback},
  author =       {Cui, Ganqu and Yuan, Lifan and Ding, Ning and Yao, Guanming and He, Bingxiang and Zhu, Wei and Ni, Yuan and Xie, Guotong and Xie, Ruobing and Lin, Yankai and Liu, Zhiyuan and Sun, Maosong},
  booktitle = 	 {Proceedings of the 41st International Conference on Machine Learning},
  pages = 	 {9722--9744},
  year = 	 {2024},
  editor = 	 {Salakhutdinov, Ruslan and Kolter, Zico and Heller, Katherine and Weller, Adrian and Oliver, Nuria and Scarlett, Jonathan and Berkenkamp, Felix},
  volume = 	 {235},
  series = 	 {Proceedings of Machine Learning Research},
  month = 	 {21--27 Jul},
  publisher =    {PMLR},
  url = 	 {https://proceedings.mlr.press/v235/cui24f.html}
}

@inproceedings{gradcache,
    title = "Scaling Deep Contrastive Learning Batch Size under Memory Limited Setup",
    author = "Gao, Luyu  and
      Zhang, Yunyi  and
      Han, Jiawei  and
      Callan, Jamie",
    editor = "Rogers, Anna  and
      Calixto, Iacer  and
      Vuli{\'c}, Ivan  and
      Saphra, Naomi  and
      Kassner, Nora  and
      Camburu, Oana-Maria  and
      Bansal, Trapit  and
      Shwartz, Vered",
    booktitle = "Proceedings of the 6th Workshop on Representation Learning for NLP (RepL4NLP-2021)",
    month = aug,
    year = "2021",
    address = "Online",
    publisher = "Association for Computational Linguistics",
    url = "https://aclanthology.org/2021.repl4nlp-1.31/",
    doi = "10.18653/v1/2021.repl4nlp-1.31",
    pages = "316--321"
}

@inproceedings{rag,
 author = {Lewis, Patrick and Perez, Ethan and Piktus, Aleksandra and Petroni, Fabio and Karpukhin, Vladimir and Goyal, Naman and K\"{u}ttler, Heinrich and Lewis, Mike and Yih, Wen-tau and Rockt\"{a}schel, Tim and Riedel, Sebastian and Kiela, Douwe},
 booktitle = {Advances in Neural Information Processing Systems},
 editor = {H. Larochelle and M. Ranzato and R. Hadsell and M.F. Balcan and H. Lin},
 pages = {9459--9474},
 publisher = {Curran Associates, Inc.},
 title = {Retrieval-Augmented Generation for Knowledge-Intensive NLP Tasks},
 url = {https://proceedings.neurips.cc/paper_files/paper/2020/file/6b493230205f780e1bc26945df7481e5-Paper.pdf},
 volume = {33},
 year = {2020}
}

@inproceedings{
bert-as-a-judge,
title={{BERT}-as-a-Judge: A Robust Alternative to Lexical Methods for Efficient Reference-Based {LLM} Evaluation},
author={Hippolyte Gisserot-Boukhlef and Nicolas Boizard and Emmanuel Malherbe and CELINE HUDELOT and Pierre Colombo},
booktitle={Third Conference on Language Modeling},
year={2026},
url={https://openreview.net/forum?id=gKNb0vxOTB}
}

@article{bert-better-than-llm,
  title={Fine-tuned 'small' LLMs (still) significantly outperform zero-shot generative AI models in text classification},
  author={Bucher, Martin Juan Jos{\'e} and Martini, Marco},
  journal={arXiv preprint arXiv:2406.08660},
  year={2024}
}

@inproceedings{bert-better-than-llm2,
    title = "Language Models for Text Classification: Is In-Context Learning Enough?",
    author = "Edwards, Aleksandra  and
      Camacho-Collados, Jose",
    editor = "Calzolari, Nicoletta  and
      Kan, Min-Yen  and
      Hoste, Veronique  and
      Lenci, Alessandro  and
      Sakti, Sakriani  and
      Xue, Nianwen",
    booktitle = "Proceedings of the 2024 Joint International Conference on Computational Linguistics, Language Resources and Evaluation (LREC-COLING 2024)",
    month = may,
    year = "2024",
    address = "Torino, Italia",
    publisher = "ELRA and ICCL",
    url = "https://aclanthology.org/2024.lrec-main.879/",
    pages = "10058--10072"
}

@inproceedings{bert-better-than-llm3,
    title = "Do {BERT}-Like Bidirectional Models Still Perform Better on Text Classification in the Era of {LLM}s?",
    author = "Zhang, Junyan  and
      Huang, Yiming  and
      Liu, Shuliang  and
      Gao, Yubo  and
      Hu, Xuming",
    editor = "Christodoulopoulos, Christos  and
      Chakraborty, Tanmoy  and
      Rose, Carolyn  and
      Peng, Violet",
    booktitle = "Findings of the Association for Computational Linguistics: EMNLP 2025",
    month = nov,
    year = "2025",
    address = "Suzhou, China",
    publisher = "Association for Computational Linguistics",
    url = "https://aclanthology.org/2025.findings-emnlp.1033/",
    doi = "10.18653/v1/2025.findings-emnlp.1033",
    pages = "18980--18989",
    ISBN = "979-8-89176-335-7"
}

@inproceedings{
flash-attn-2,
title={FlashAttention-2: Faster Attention with Better Parallelism and Work Partitioning},
author={Tri Dao},
booktitle={The Twelfth International Conference on Learning Representations},
year={2024},
url={https://openreview.net/forum?id=mZn2Xyh9Ec}
}

@article{swiglu,
  title={Glu variants improve transformer},
  author={Shazeer, Noam},
  journal={arXiv preprint arXiv:2002.05202},
  year={2020}
}

@inproceedings{
wsd,
title={Mini{CPM}: Unveiling the Potential of Small Language Models with Scalable Training Strategies},
author={Shengding Hu and Yuge Tu and Xu Han and Ganqu Cui and Chaoqun He and Weilin Zhao and Xiang Long and Zhi Zheng and Yewei Fang and Yuxiang Huang and Xinrong Zhang and Zhen Leng Thai and Chongyi Wang and Yuan Yao and Chenyang Zhao and Jie Zhou and Jie Cai and Zhongwu Zhai and Ning Ding and Chao Jia and Guoyang Zeng and dahai li and Zhiyuan Liu and Maosong Sun},
booktitle={First Conference on Language Modeling},
year={2024},
url={https://openreview.net/forum?id=3X2L2TFr0f}
}

@inproceedings{bert-icl,
 author = {Samuel, David},
 booktitle = {Advances in Neural Information Processing Systems},
 doi = {10.52202/079017-0084},
 editor = {A. Globerson and L. Mackey and D. Belgrave and A. Fan and U. Paquet and J. Tomczak and C. Zhang},
 pages = {2558--2589},
 publisher = {Curran Associates, Inc.},
 title = {BERTs are Generative In-Context Learners},
 url = {https://proceedings.neurips.cc/paper_files/paper/2024/file/04ea184dfb5f1babb78c093e850a83f9-Paper-Conference.pdf},
 volume = {37},
 year = {2024}
}

@article{electra-cola,
  title={Efficient pre-training objectives for transformers},
  author={Di Liello, Luca and Gabburo, Matteo and Moschitti, Alessandro},
  journal={arXiv preprint arXiv:2104.09694},
  year={2021}
}

@InProceedings{rankme,
  title = 	 {{R}ank{M}e: Assessing the Downstream Performance of Pretrained Self-Supervised Representations by Their Rank},
  author =       {Garrido, Quentin and Balestriero, Randall and Najman, Laurent and Lecun, Yann},
  booktitle = 	 {Proceedings of the 40th International Conference on Machine Learning},
  pages = 	 {10929--10974},
  year = 	 {2023},
  editor = 	 {Krause, Andreas and Brunskill, Emma and Cho, Kyunghyun and Engelhardt, Barbara and Sabato, Sivan and Scarlett, Jonathan},
  volume = 	 {202},
  series = 	 {Proceedings of Machine Learning Research},
  month = 	 {23--29 Jul},
  publisher =    {PMLR},
  url = 	 {https://proceedings.mlr.press/v202/garrido23a.html}
}

@article{koleo-survey,
author = {Beirlant, Jan and Dudewicz, E. and Gyor, L. and Meulen, E.C.},
year = {1997},
month = {01},
pages = {},
title = {Nonparametric Entropy Estimation: An Overview},
volume = {6},
journal = {International Journal of Mathematical and Statistical Sciences}
}

@inbook{pinsker,
publisher = {John Wiley \& Sons, Ltd},
author = {Cover, Thomas M. and Thomas, Joy A.},
isbn = {9780471748823},
title = {Information Theory and Statistics},
booktitle = {Elements of Information Theory},
chapter = {11},
pages = {347-408},
doi = {https://doi.org/10.1002/047174882X.ch11},
url = {https://onlinelibrary.wiley.com/doi/abs/10.1002/047174882X.ch11},
eprint = {https://onlinelibrary.wiley.com/doi/pdf/10.1002/047174882X.ch11},
year = {2005}
}

@article{jetparticle-jepa,
  title={JetParticle-JEPA: An Efficient Self-Supervised Representation Learning method for Jet Tagging in High-Energy Physics},
  author={Letellier, Guillaume and Vacheret, Antonin and Jurie, Fr{\'e}d{\'e}ric},
  journal={arXiv preprint arXiv:2606.14813},
  year={2026}
}


%% file: References/MLM_Masking.bib
@inproceedings{r2mae,
 author = {Dong, Mingze and Wang, Leda and Kluger, Yuval},
 booktitle = {Advances in Neural Information Processing Systems},
 doi = {10.52202/085713-4777},
 editor = {D. Belgrave and C. Zhang and H. Lin and R. Pascanu and P. Koniusz and M. Ghassemi and N. Chen},
 pages = {142696--142739},
 publisher = {Curran Associates, Inc.},
 title = {Understanding and Enhancing Mask-Based Pretraining towards Universal Representations},
 url = {https://proceedings.neurips.cc/paper_files/paper/2025/file/d202099e9d671859246a0a78b66117d7-Paper-Conference.pdf},
 volume = {38, Main Conference},
 year = {2025}
}

@inproceedings{dynamic-masking-rate,
    title = "Dynamic Masking Rate Schedules for {MLM} Pretraining",
    author = "Ankner, Zachary  and
      Saphra, Naomi  and
      Blalock, Davis  and
      Frankle, Jonathan  and
      Leavitt, Matthew",
    editor = "Graham, Yvette  and
      Purver, Matthew",
    booktitle = "Proceedings of the 18th Conference of the European Chapter of the Association for Computational Linguistics (Volume 2: Short Papers)",
    month = mar,
    year = "2024",
    address = "St. Julian{'}s, Malta",
    publisher = "Association for Computational Linguistics",
    url = "https://aclanthology.org/2024.eacl-short.42/",
    doi = "10.18653/v1/2024.eacl-short.42",
    pages = "477--487"
}

@inproceedings{learning-better-masking,
    title = "Learning Better Masking for Better Language Model Pre-training",
    author = "Yang, Dongjie  and
      Zhang, Zhuosheng  and
      Zhao, Hai",
    editor = "Rogers, Anna  and
      Boyd-Graber, Jordan  and
      Okazaki, Naoaki",
    booktitle = "Proceedings of the 61st Annual Meeting of the Association for Computational Linguistics (Volume 1: Long Papers)",
    month = jul,
    year = "2023",
    address = "Toronto, Canada",
    publisher = "Association for Computational Linguistics",
    url = "https://aclanthology.org/2023.acl-long.400/",
    doi = "10.18653/v1/2023.acl-long.400",
    pages = "7255--7267"
}

@inproceedings{should-you-mask-15-mlm,
    title = "Should You Mask 15{\%} in Masked Language Modeling?",
    author = "Wettig, Alexander  and
      Gao, Tianyu  and
      Zhong, Zexuan  and
      Chen, Danqi",
    editor = "Vlachos, Andreas  and
      Augenstein, Isabelle",
    booktitle = "Proceedings of the 17th Conference of the European Chapter of the Association for Computational Linguistics",
    month = may,
    year = "2023",
    address = "Dubrovnik, Croatia",
    publisher = "Association for Computational Linguistics",
    url = "https://aclanthology.org/2023.eacl-main.217/",
    doi = "10.18653/v1/2023.eacl-main.217",
    pages = "2985--3000"
}

@inproceedings{
should-we-still-pretrain-encoders-mlm,
title={Should We Still Pretrain Encoders with Masked Language Modeling?},
author={Hippolyte Gisserot-Boukhlef and Nicolas Boizard and Manuel Faysse and Duarte Miguel Alves and Emmanuel Malherbe and Andre Martins and CELINE HUDELOT and Pierre Colombo},
booktitle={The Fourteenth International Conference on Learning Representations},
year={2026},
url={https://openreview.net/forum?id=jpz7e3jhRq}
}


%% file: References/Models.bib
@inproceedings{
eurobert,
title={Euro{BERT}: Scaling Multilingual Encoders for European Languages},
author={Nicolas Boizard and Hippolyte Gisserot-Boukhlef and Duarte Miguel Alves and Andre Martins and Ayoub Hammal and Caio Corro and CELINE HUDELOT and Emmanuel Malherbe and Etienne Malaboeuf and Fanny Jourdan and Gabriel Hautreux and Jo{\~a}o Alves and Kevin El Haddad and Manuel Faysse and Maxime Peyrard and Nuno M Guerreiro and Patrick Fernandes and Ricardo Rei and Pierre Colombo},
booktitle={Second Conference on Language Modeling},
year={2025},
url={https://openreview.net/forum?id=jdOC24msVq}
}

@inproceedings{
electra,
title={ELECTRA: Pre-training Text Encoders as Discriminators Rather Than Generators},
author={Kevin Clark and Minh-Thang Luong and Quoc V. Le and Christopher D. Manning},
booktitle={International Conference on Learning Representations},
year={2020},
url={https://openreview.net/forum?id=r1xMH1BtvB}
}

@inproceedings{
debertav3,
title={De{BERT}aV3: Improving De{BERT}a using {ELECTRA}-Style Pre-Training with Gradient-Disentangled Embedding Sharing},
author={Pengcheng He and Jianfeng Gao and Weizhu Chen},
booktitle={The Eleventh International Conference on Learning Representations },
year={2023},
url={https://openreview.net/forum?id=sE7-XhLxHA}
}

@inproceedings{modernbert,
    title = "Smarter, Better, Faster, Longer: A Modern Bidirectional Encoder for Fast, Memory Efficient, and Long Context Finetuning and Inference",
    author = {Warner, Benjamin  and
      Chaffin, Antoine  and
      Clavi{\'e}, Benjamin  and
      Weller, Orion  and
      Hallstr{\"o}m, Oskar  and
      Taghadouini, Said  and
      Gallagher, Alexis  and
      Biswas, Raja  and
      Ladhak, Faisal  and
      Aarsen, Tom  and
      Adams, Griffin Thomas  and
      Howard, Jeremy  and
      Poli, Iacopo},
    editor = "Che, Wanxiang  and
      Nabende, Joyce  and
      Shutova, Ekaterina  and
      Pilehvar, Mohammad Taher",
    booktitle = "Proceedings of the 63rd Annual Meeting of the Association for Computational Linguistics (Volume 1: Long Papers)",
    month = jul,
    year = "2025",
    address = "Vienna, Austria",
    publisher = "Association for Computational Linguistics",
    url = "https://aclanthology.org/2025.acl-long.127/",
    doi = "10.18653/v1/2025.acl-long.127",
    pages = "2526--2547",
    ISBN = "979-8-89176-251-0"
}

@inproceedings{sbert,
    title = "Sentence-{BERT}: Sentence Embeddings using {S}iamese {BERT}-Networks",
    author = "Reimers, Nils  and
      Gurevych, Iryna",
    editor = "Inui, Kentaro  and
      Jiang, Jing  and
      Ng, Vincent  and
      Wan, Xiaojun",
    booktitle = "Proceedings of the 2019 Conference on Empirical Methods in Natural Language Processing and the 9th International Joint Conference on Natural Language Processing (EMNLP-IJCNLP)",
    month = nov,
    year = "2019",
    address = "Hong Kong, China",
    publisher = "Association for Computational Linguistics",
    url = "https://aclanthology.org/D19-1410/",
    doi = "10.18653/v1/D19-1410",
    pages = "3982--3992"
}

@article{
nomic-embed,
title={Nomic Embed: Training a Reproducible Long Context Text Embedder},
author={Zach Nussbaum and John Xavier Morris and Andriy Mulyar and Brandon Duderstadt},
journal={Transactions on Machine Learning Research},
issn={2835-8856},
year={2025},
url={https://openreview.net/forum?id=IPmzyQSiQE},
note={Reproducibility Certification}
}

@article{gte,
  title={Towards general text embeddings with multi-stage contrastive learning},
  author={Li, Zehan and Zhang, Xin and Zhang, Yanzhao and Long, Dingkun and Xie, Pengjun and Zhang, Meishan},
  journal={arXiv preprint arXiv:2308.03281},
  year={2023}
}

@inproceedings{mgte,
    title = "{mGTE}: Generalized Long-Context Text Representation and Reranking Models for Multilingual Text Retrieval",
    author = "Zhang, Xin  and
      Zhang, Yanzhao  and
      Long, Dingkun  and
      Xie, Wen  and
      Dai, Ziqi  and
      Tang, Jialong  and
      Lin, Huan  and
      Yang, Baosong  and
      Xie, Pengjun  and
      Huang, Fei  and
      Zhang, Meishan  and
      Li, Wenjie  and
      Zhang, Min",
    editor = "Dernoncourt, Franck  and
      Preo{\c{t}}iuc-Pietro, Daniel  and
      Shimorina, Anastasia",
    booktitle = "Proceedings of the 2024 Conference on Empirical Methods in Natural Language Processing: Industry Track",
    month = nov,
    year = "2024",
    address = "Miami, Florida, US",
    publisher = "Association for Computational Linguistics",
    url = "https://aclanthology.org/2024.emnlp-industry.103/",
    doi = "10.18653/v1/2024.emnlp-industry.103",
    pages = "1393--1412"
}

@article{e5,
  title={Text embeddings by weakly-supervised contrastive pre-training},
  author={Wang, Liang and Yang, Nan and Huang, Xiaolong and Jiao, Binxing and Yang, Linjun and Jiang, Daxin and Majumder, Rangan and Wei, Furu},
  journal={arXiv preprint arXiv:2212.03533},
  year={2022}
}

@article{
dinov2,
title={{DINO}v2: Learning Robust Visual Features without Supervision},
author={Maxime Oquab and Timoth{\'e}e Darcet and Th{\'e}o Moutakanni and Huy V. Vo and Marc Szafraniec and Vasil Khalidov and Pierre Fernandez and Daniel HAZIZA and Francisco Massa and Alaaeldin El-Nouby and Mido Assran and Nicolas Ballas and Wojciech Galuba and Russell Howes and Po-Yao Huang and Shang-Wen Li and Ishan Misra and Michael Rabbat and Vasu Sharma and Gabriel Synnaeve and Hu Xu and Herve Jegou and Julien Mairal and Patrick Labatut and Armand Joulin and Piotr Bojanowski},
journal={Transactions on Machine Learning Research},
issn={2835-8856},
year={2024},
url={https://openreview.net/forum?id=a68SUt6zFt},
note={Featured Certification}
}

@article{
dinov3,
title={{DINO}v3},
author={Oriane Sim{\'e}oni and Huy V. Vo and Maximilian Seitzer and Federico Baldassarre and Maxime Oquab and Cijo Jose and Vasil Khalidov and Marc Szafraniec and Seung Eun Yi and Michael Ramamonjisoa and Francisco Massa and Daniel HAZIZA and Luca Wehrstedt and Jianyuan Wang and Timoth{\'e}e Darcet and Th{\'e}o Moutakanni and Leonel Sentana and Claire Roberts and Andrea Vedaldi and Jamie Tolan and John Brandt and Camille Couprie and Julien Mairal and Herve Jegou and Patrick Labatut and Piotr Bojanowski},
journal={Transactions on Machine Learning Research},
issn={2835-8856},
year={2026},
url={https://openreview.net/forum?id=2NlGyqNjns},
note={Featured Certification}
}

@article{distilbert,
  title={DistilBERT, a distilled version of BERT: smaller, faster, cheaper and lighter},
  author={Sanh, Victor and Debut, Lysandre and Chaumond, Julien and Wolf, Thomas},
  journal={arXiv preprint arXiv:1910.01108},
  year={2019}
}

@inproceedings{tinybert,
    title = "{T}iny{BERT}: Distilling {BERT} for Natural Language Understanding",
    author = "Jiao, Xiaoqi  and
      Yin, Yichun  and
      Shang, Lifeng  and
      Jiang, Xin  and
      Chen, Xiao  and
      Li, Linlin  and
      Wang, Fang  and
      Liu, Qun",
    editor = "Cohn, Trevor  and
      He, Yulan  and
      Liu, Yang",
    booktitle = "Findings of the Association for Computational Linguistics: EMNLP 2020",
    month = nov,
    year = "2020",
    address = "Online",
    publisher = "Association for Computational Linguistics",
    url = "https://aclanthology.org/2020.findings-emnlp.372/",
    doi = "10.18653/v1/2020.findings-emnlp.372",
    pages = "4163--4174"
}

@inproceedings{minilm,
 author = {Wang, Wenhui and Wei, Furu and Dong, Li and Bao, Hangbo and Yang, Nan and Zhou, Ming},
 booktitle = {Advances in Neural Information Processing Systems},
 editor = {H. Larochelle and M. Ranzato and R. Hadsell and M.F. Balcan and H. Lin},
 pages = {5776--5788},
 publisher = {Curran Associates, Inc.},
 title = {MiniLM: Deep Self-Attention Distillation for Task-Agnostic Compression of Pre-Trained Transformers},
 url = {https://proceedings.neurips.cc/paper_files/paper/2020/file/3f5ee243547dee91fbd053c1c4a845aa-Paper.pdf},
 volume = {33},
 year = {2020}
}

@inproceedings{minilmv2,
    title = "{M}ini{LM}v2: Multi-Head Self-Attention Relation Distillation for Compressing Pretrained Transformers",
    author = "Wang, Wenhui  and
      Bao, Hangbo  and
      Huang, Shaohan  and
      Dong, Li  and
      Wei, Furu",
    editor = "Zong, Chengqing  and
      Xia, Fei  and
      Li, Wenjie  and
      Navigli, Roberto",
    booktitle = "Findings of the Association for Computational Linguistics: ACL-IJCNLP 2021",
    month = aug,
    year = "2021",
    address = "Online",
    publisher = "Association for Computational Linguistics",
    url = "https://aclanthology.org/2021.findings-acl.188/",
    doi = "10.18653/v1/2021.findings-acl.188",
    pages = "2140--2151"
}

@article{gemma2,
  title={Gemma 2: Improving open language models at a practical size},
  author={{Gemma Team} and Riviere, Morgane and Pathak, Shreya and Sessa, Pier Giuseppe and Hardin, Cassidy and Bhupatiraju, Surya and Hussenot, L{\'e}onard and Mesnard, Thomas and Shahriari, Bobak and Ram{\'e}, Alexandre and others},
  journal={arXiv preprint arXiv:2408.00118},
  year={2024}
}

@article{gemma3,
  title={Gemma 3 technical report},
  author={{Gemma Team} and Kamath, Aishwarya and Ferret, Johan and Pathak, Shreya and Vieillard, Nino and Merhej, Ramona and Perrin, Sarah and Matejovicova, Tatiana and Ram{\'e}, Alexandre and Rivi{\`e}re, Morgane and others},
  journal={arXiv preprint arXiv:2503.19786},
  year={2025}
}

@article{ministral3,
  title={Ministral 3},
  author={Liu, Alexander H and Khandelwal, Kartik and Subramanian, Sandeep and Jouault, Victor and Rastogi, Abhinav and Sad{\'e}, Adrien and Jeffares, Alan and Jiang, Albert and Cahill, Alexandre and Gavaudan, Alexandre and others},
  journal={arXiv preprint arXiv:2601.08584},
  year={2026}
}

@article{lfm2,
  title={Lfm2 technical report},
  author={Amini, Alexander and Banaszak, Anna and Benoit, Harold and B{\"o}{\"o}k, Arthur and Dakhran, Tarek and Duong, Song and Eng, Alfred and Fernandes, Fernando and H{\"a}rk{\"o}nen, Marc and Harrington, Anne and others},
  journal={arXiv preprint arXiv:2511.23404},
  year={2025}
}

@inproceedings{minitron,
 author = {Muralidharan, Saurav and Turuvekere Sreenivas, Sharath and Joshi, Raviraj and Chochowski, Marcin and Patwary, Mostofa and Shoeybi, Mohammad and Catanzaro, Bryan and Kautz, Jan and Molchanov, Pavlo},
 booktitle = {Advances in Neural Information Processing Systems},
 doi = {10.52202/079017-1299},
 editor = {A. Globerson and L. Mackey and D. Belgrave and A. Fan and U. Paquet and J. Tomczak and C. Zhang},
 pages = {41076--41102},
 publisher = {Curran Associates, Inc.},
 title = {Compact Language Models via Pruning and Knowledge Distillation},
 url = {https://proceedings.neurips.cc/paper_files/paper/2024/file/4822991365c962105b1b95b1107d30e5-Paper-Conference.pdf},
 volume = {37},
 year = {2024}
}

@misc{llama3.2,
  author       = {{Meta AI}},
  title        = {Llama 3.2: Revolutionizing Edge AI and Vision with Open, Customizable Models},
  year         = {2024},
  url          = {https://ai.meta.com/blog/llama-3-2-connect-2024-vision-edge-mobile-devices/}
}

@inproceedings{bert,
    title = "{BERT}: Pre-training of Deep Bidirectional Transformers for Language Understanding",
    author = "Devlin, Jacob  and
      Chang, Ming-Wei  and
      Lee, Kenton  and
      Toutanova, Kristina",
    editor = "Burstein, Jill  and
      Doran, Christy  and
      Solorio, Thamar",
    booktitle = "Proceedings of the 2019 Conference of the North {A}merican Chapter of the Association for Computational Linguistics: Human Language Technologies, Volume 1 (Long and Short Papers)",
    month = jun,
    year = "2019",
    address = "Minneapolis, Minnesota",
    publisher = "Association for Computational Linguistics",
    url = "https://aclanthology.org/N19-1423/",
    doi = "10.18653/v1/N19-1423",
    pages = "4171--4186"
}

@article{roberta,
  title={Roberta: A robustly optimized bert pretraining approach},
  author={Liu, Yinhan and Ott, Myle and Goyal, Naman and Du, Jingfei and Joshi, Mandar and Chen, Danqi and Levy, Omer and Lewis, Mike and Zettlemoyer, Luke and Stoyanov, Veselin},
  journal={arXiv preprint arXiv:1907.11692},
  year={2019}
}

@inproceedings{mosaicbert,
 author = {Portes, Jacob and Trott, Alexander and Havens, Sam and KING, DANIEL and Venigalla, Abhinav and Nadeem, Moin and Sardana, Nikhil and Khudia, Daya and Frankle, Jonathan},
 booktitle = {Advances in Neural Information Processing Systems},
 doi = {10.52202/075280-0137},
 editor = {A. Oh and T. Naumann and A. Globerson and K. Saenko and M. Hardt and S. Levine},
 pages = {3106--3130},
 publisher = {Curran Associates, Inc.},
 title = {MosaicBERT: A Bidirectional Encoder Optimized for Fast Pretraining},
 url = {https://proceedings.neurips.cc/paper_files/paper/2023/file/095a6917768712b7ccc61acbeecad1d8-Paper-Conference.pdf},
 volume = {36},
 year = {2023}
}

@inproceedings{
ettin,
title={Seq vs Seq: An Open Suite of Paired Encoders and Decoders},
author={Orion Weller and Kathryn Ricci and Marc Marone and Antoine Chaffin and Dawn Lawrie and Benjamin Van Durme},
booktitle={The Fourteenth International Conference on Learning Representations},
year={2026},
url={https://openreview.net/forum?id=z5Mn8Rxi3l}
}

@article{
neobert,
title={Neo{BERT}: A Next Generation {BERT}},
author={Lola Le Breton and Quentin Fournier and John Xavier Morris and Mariam El Mezouar and Sarath Chandar},
journal={Transactions on Machine Learning Research},
issn={2835-8856},
year={2025},
url={https://openreview.net/forum?id=TJRyDi7mwH},
note={Reproducibility Certification}
}

@article{lfm2.5-encoder,
  author = {{Liquid AI}},
  title = {LFM2.5-Encoders: Fast at Long Context, Even on CPU},
  journal = {Liquid AI Blog},
  year = {2026},
  note = {\url{www.liquid.ai/blog/lfm2-5-encoders}},
}

@article{rope,
title = {RoFormer: Enhanced transformer with Rotary Position Embedding},
journal = {Neurocomputing},
volume = {568},
pages = {127063},
year = {2024},
issn = {0925-2312},
doi = {https://doi.org/10.1016/j.neucom.2023.127063},
url = {https://www.sciencedirect.com/science/article/pii/S0925231223011864},
author = {Jianlin Su and Murtadha Ahmed and Yu Lu and Shengfeng Pan and Wen Bo and Yunfeng Liu}
}

@article{olmo2,
  title={2 OLMo 2 Furious},
  author={{OLMo Team} and Walsh, Pete and Soldaini, Luca and Groeneveld, Dirk and Lo, Kyle and Arora, Shane and Bhagia, Akshita and Gu, Yuling and Huang, Shengyi and Jordan, Matt and others},
  journal={arXiv preprint arXiv:2501.00656},
  year={2024}
}

@article{mcbert,
  title={Mc-bert: Efficient language pre-training via a meta controller},
  author={Xu, Zhenhui and Gong, Linyuan and Ke, Guolin and He, Di and Zheng, Shuxin and Wang, Liwei and Bian, Jiang and Liu, Tie-Yan},
  journal={arXiv preprint arXiv:2006.05744},
  year={2020}
}

@article{mmbert,
  title={mmbert: A modern multilingual encoder with annealed language learning},
  author={Marone, Marc and Weller, Orion and Fleshman, William and Yang, Eugene and Lawrie, Dawn and Van Durme, Benjamin},
  journal={arXiv preprint arXiv:2509.06888},
  year={2025}
}
